\documentclass{article}
\usepackage{amsmath,epsfig,bm, amssymb,subfigure,graphicx,algorithm,algorithmic,color}
\newtheorem{defi}{Definition}
\newtheorem{prop}[defi]{Proposition}

\newcommand{\argmax}{\mathop{\mathrm{argmax\,}}}


\newcommand{\mathbbR}{\mathbb{R}}

\newcommand{\boldzero}{{\boldsymbol{0}}}
\newcommand{\boldone}{{\boldsymbol{1}}}

\newcommand{\boldF}{{\boldsymbol{F}}}
\newcommand{\boldG}{{\boldsymbol{G}}}

\newcommand{\boldI}{{\boldsymbol{I}}}

\newcommand{\boldK}{{\boldsymbol{K}}}
\newcommand{\boldL}{{\boldsymbol{L}}}
\newcommand{\boldM}{{\boldsymbol{M}}}

\newcommand{\boldQ}{{\boldsymbol{Q}}}
\newcommand{\boldR}{{\boldsymbol{R}}}

\newcommand{\boldX}{{\boldsymbol{X}}}

\newcommand{\boldc}{{\boldsymbol{c}}}

\newcommand{\boldf}{{\boldsymbol{f}}}

\newcommand{\boldr}{{\boldsymbol{r}}}

\newcommand{\boldu}{{\boldsymbol{u}}}

\newcommand{\boldx}{{\boldsymbol{x}}}
\newcommand{\boldy}{{\boldsymbol{y}}}

\newcommand{\boldalpha}{{\boldsymbol{\alpha}}}

\newcommand{\boldmu}{{\boldsymbol{\mu}}}

\newcommand{\boldGamma}{{\boldsymbol{\Gamma}}}

\newcommand{\boldSigma}{{\boldsymbol{\Sigma}}}

\newcommand{\calA}{{\mathcal{A}}}

\newcommand{\calI}{{\mathcal{I}}}

\newcommand{\calY}{{\mathcal{Y}}}






\usepackage{url,multirow}

\advance\oddsidemargin-1.0in
\textheight8.5in
\textwidth6.75in

\date{\today}
\title{Ultra High-Dimensional Nonlinear Feature Selection for Big Biological Data} 

\author{Makoto Yamada$^{\ast,1,2}$ \and Jiliang Tang$^2$ \and Jose Lugo-Martinez$^3$ \and Ermin Hodzic$^4$ \and Raunak Shrestha$^5$ \and Avishek Saha$^2$ \and Hua Ouyang$^2$ \and Dawei Yin$^2$ \and Hiroshi Mamitsuka$^1$ \and Cenk Sahinalp$^{3,4}$ \and Predrag Radivojac$^3$ \and Filippo Menczer$^{2,3}$ \and Yi Chang$^2$ \\
$^1$Kyoto University. Gokasho, Uji, Japan \\
$^2$Yahoo Research, Sunnyvale, California, USA \\
$^3$Indiana University, Bloomington, Indiana, USA\\
$^4$Simon Fraser University, Burnaby, BC, Canada \\
$^5$Vancouver Prostate Centre, Vancouver, BC, Canada\\
\texttt{{makoto.m.yamada@ieee.org}}
}

\begin{document}
\maketitle

\begin{abstract}
 Machine  learning methods are used to discover complex nonlinear relationships in biological and medical data. However, sophisticated learning models are computationally unfeasible for data with millions of features. Here we introduce the first feature selection method for nonlinear learning problems that can scale up to large, ultra-high dimensional biological data. More specifically, we scale up the novel Hilbert-Schmidt Independence Criterion Lasso (HSIC Lasso) to handle millions of features with tens of thousand samples. The proposed method is guaranteed to find an optimal subset of maximally predictive features with minimal redundancy, yielding higher predictive power and improved interpretability. Its effectiveness is demonstrated through applications to classify phenotypes based on module expression in human prostate cancer patients and to detect enzymes among protein structures. We achieve high accuracy with as few as 20 out of one million features --- a dimensionality reduction of 99.998\%. Our algorithm can be implemented on commodity cloud computing platforms. The dramatic reduction of features may lead to the ubiquitous deployment of sophisticated prediction models in mobile health care applications.
\end{abstract}

\section{Introduction}
Life sciences are going through a revolution thanks to the possibility to collect and learn from massive biological data \cite{Marx:2013aa}. Efficient processing of such ``big data'' is extremely important for many medical and biological applications, including disease classification, biomarker discovery, drug development and scientific discovery \cite{guyon2003introduction}.
The complexity of biological data is dramatically increasing due to improvements in measuring devices such as next-generation sequencers, microarrays and mass spectrometers \cite{Li2014187}.
As a result, we must deal with data that includes many observations (hundreds to tens of thousands) and even larger numbers of features (thousands to millions). Machine learning algorithms are charged with learning patterns and extracting actionable information from biological data. These techniques have been used successfully in various analytical tasks, such as genome-wide association studies \cite{burton2007genome} and gene selection \cite{guyon2003introduction}.

However, the scale and complexity of big biological data pose new challenges to existing machine learning algorithms. There is a trade-off between scalability and complexity: linear methods scale better to large data, but cannot model complex patterns. Nonlinear models can handle complex relationships in the data but are not scalable to the size of current datasets. In particular, learning nonlinear models requires a number of observations that grows exponentially with the number of features \cite{guyon2003introduction,saeys2007review}. Biological data generated by modern technology has as many as millions of features, making the learning of nonlinear models unfeasible with existing techniques. To make matters worse, current nonlinear approaches cannot take advantage of distributed computing platforms.

A promising approach to make nonlinear analysis of big biological data computationally tractable is to reduce the number of features. This method is called \emph{feature selection} \cite{guyon2003introduction}. Biological data is often represented by matrices where rows denote features and columns denote observations. Feature selection aims to identify a subset of features (rows) to be preserved, while eliminating all others. There are two reasons why the predictive capability of the data may be preserved or even improved when many features are excluded. First, measurements generate many features automatically and their quality is hard to control \cite{donoho2008higher}. Second, biological features are often highly redundant, so that the number of useful features is small. For example, among millions of Single Nucleotide Polymorphisms (SNPs), only a few are useful to predict a certain disease \cite{saeys2007review}. Although state-of-the-art feature selection algorithms, such as \emph{minimum redundancy maximum relevance} (mRMR) \cite{PAMI:Peng+etal:2005}, have been proven to be effective and efficient in preparing data for many tasks, they cannot scale up to biological data with millions of features. Moreover, mRMR uses greedy search strategies such as forward selection/backward elimination and tends to produce locally optimal feature sets.
 
 {
Here we propose a novel feature selection framework for big biological data that makes it possible for the first time to identify very few relevant, non-redundant features among millions. The proposed method is based on two components: \emph{Least Angle} Regression (LARS), an efficient feature selection method \cite{efron2004least}, and the Hilbert-Schmidt Independence Criterion (HSIC), which enables the selection of features that are non-linearly related~\cite{ALT:Gretton+etal:2005}. These properties are combined to obtain a method that can exploit \emph{nonlinear} feature dependencies efficiently, and furthermore enables \emph{distributed} implementation on commodity cloud computing platforms. We name our algorithm \emph{Least Angle Nonlinear Distributed} (LAND) feature selection. Experiments demonstrate that the proposed method can reduce the number of features in real-world biological datasets from one million to tens or hundreds, while preserving or increasing prediction accuracy for biological applications. }

The following sections present the proposed LAND method in detail and show performance evaluation of LAND on three large, high-dimensional datasets related to the problems of discovering mutations in the tumor suppressor protein p53, classifying cancer phenotypes in a cohort of human prostate cancer patients, and detecting enzymes among protein structures. While existing feature selection methods cannot consider nonlinear dependencies among features in these problems, we show that our approach can reduce the dimensionality by five orders of magnitude.

These results are achieved in minutes to hours of cluster computing time. The selected features are relevant and non-redundant, making it possible to obtain accurate and interpretable models that can be run on a laptop computer.

\vspace{.1in}
\noindent {\bf Contribution:}
\begin{itemize}
\item {We scale up the novel Hilbert-Schmidt Independence Criterion Lasso (HSIC Lasso)~\cite{yamada2014high} to handle ultra high-dimensional and large-scale datasets. To the best of our knowledge, this is the first \emph{minimum redundancy maximum relevance} feature selection method that can handle tens of thousand data samples with millions of features.}
\item {We propose the first implementation of nonlinear feature selection on distributed computing platforms.}
\item {We demonstrate that LAND feature selection outperforms state-of-the-art methods on three real-world, large, high-dimensional datasets.}
\end{itemize}

\section{Related Work}
\label{sec:existing}

In this section, we review existing nonlinear feature selection methods and show their drawbacks.

 Maximum Relevance (MR) feature selection is a popular approach that selects $m$ features with the largest relevance to the output \cite{PAMI:Peng+etal:2005}. The feature screening method \cite{balasubramanian2013ultrahigh} is also an MR-method. Usually, the mutual information and a kernel-based independence measures such as HSIC are used as the relevance score \cite{ALT:Gretton+etal:2005}. MR-based methods are simple yet efficient and can be easily applicable to high-dimensional and large sample problems. However, since MR-based approaches only use input-output relevance and not use input-input relevance, they tend to select redundant features (i.e., the selected features can be very similar to each other). As a result it may not help in improving overall classification/regression accuracy and interpretability. 

Minimum Redundancy Maximum Relevance (mRMR)~\cite{PAMI:Peng+etal:2005} was proposed to deal with the feature redundancy problem; it selects features that have high relevance with respect to an output and are non-redundant. It has been experimentally shown that mRMR outperforms MR feature selection methods~\cite{PAMI:Peng+etal:2005}. Moreover, there exists an off-the-shelf C++ implementation of mRMR, and it can be applicable to a large and high dimensional feature selection. Fast Correlation based filter (FCBF) can also be regarded as an mRMR method, in which it uses symmetrical uncertainty to calculate
dependences of features and finds best subset using backward selection with sequential search strategy \cite{yu2003feature}. Note that, it has also been reported that FCBF compares favorably with mRMR \cite{senliol2008fast}. However, both mRMR and FCBF use greedy search strategies such as forward selection/backward elimination and tends to produce locally optimal feature set. 

To obtain a globally optimal feature set, a convex relaxed version of mRMR called the Quadratic Programming Feature Selection (QPFS) and SPEC were proposed in~\cite{JMLR:Rodriguez+etal:2010,nguyen2014effective}. An advantage of QPFS and SPEC over mRMR is that it can find a globally optimal solution by just solving a QP problem. The authors showed that QPFS compares favorably with mRMR for large sample size but low-dimensional cases (e.g., $d < 10^3$ and $n > 10^4$). However, QPFS and SPEC tend to be computationally expensive for large and high-dimensional  cases, since they need to compute $d(d -1)/2$ mutual information scores. To deal with the computational problem in QPFS, a Nystr\"{o}m approximation based approach was proposed~\cite{JMLR:Rodriguez+etal:2010}, and it has been experimentally shown that QPFS with Nystr\"{o}m approximation compares favorably with mRMR both in accuracy and time.  However, for large and high-dimensional settings, computational cost for mutual information is still very high. 

Forward/Backward elimination based feature selection with HSIC (FOHSIC/BAHSIC) is also a widely used feature selection method \cite{song2012feature}.   An advantage of HSIC-based feature selection over mRMR is that the HSIC score can be accurately estimated. Moreover, HSIC can be implemented very easily. However, similar to mRMR, it selects features using greedy search algorithm and tends to have a locally optimal feature set. To obtain a better feature set, HSFS was proposed~\cite{DBLP:conf/icml/MasaeliFD10} as a continuously relaxed version of FOHSIC/BAHSIC that could be solved by limited-memory BFGS (L-BFGS)~\cite{book:Nocedal:2003}.  However, HSFS is a non-convex method and restarting from many different initial points would be necessary to select good features which is computationally expensive. 

For small and high-dimensional feature selection problems (e.g., $n < 100$ and $d> 10^4$), $\ell_1$ regularized based approaches such as Lasso are useful \cite{JRSSB:Tibshirani:1996,zhao2010efficient}. In addition, Lasso is known to scale well with both number of samples as well as dimensionality \cite{JRSSB:Tibshirani:1996,zhao2010efficient}. However, Lasso can only capture linear dependency between input features and output values. To handle non-linearity,  { HSIC Lasso} was proposed recently \cite{yamada2014high}. In HSIC Lasso, with specific choice of kernel functions, non-redundant features with strong statistical dependence on output values can be found in terms of HSIC by simply solving a Lasso problem. Although, empirical evidence~\cite{yamada2014high} shows that HSIC Lasso outperforms most existing feature selection methods, in general HSIC Lasso tends to be expensive compared to simple Lasso when the number of samples increases. Moreover, statistical properties of HSIC Lasso is not well studied. Recently, a few \emph{wrapper} type of feature selection methods including the feature generating machine \cite{JMLR:v15:tan14a} and SVM based approach \cite{sunfeature} were proposed. These methods are state-of-the-art feature selection methods for high-dimensional and/or large-scale data. However, those wrapper methods are computationally expensive for \emph{ultra} high-dimensional and large-scale datasets. 

Sparse Additive Models (SpAM) are useful for high-dimensional feature selection problems~\cite{ravikumar2009sparse,NIPS2008_0329, raskutti2012minimax,suzuki2012fast} and	can be efficiently solved by the \emph{back-fitting} algorithms~\cite{ravikumar2009sparse} resulting in globally optimal solutions. Also, statistical properties of the SpAM estimator are well studied~\cite{ravikumar2009sparse}.  However, a potential weakness of SpAM is that it can only deal with additive models and may not work well for non-additive models. 
 Hierarchical Multiple Kernel Learning (HMKL) \cite{bach2009exploring,jawanpuria2015generalized} is also a nonlinear feature selection method and can fit complex functions such as non-additive functions. However, the computation cost of HMKL is rather expensive. In particular, since HMKL searches the combination of kernels from $(m + 1)^d$ combinations ($m$ is the total number of selected kernels), the computation cost heavily depends on the dimensionality $d$.

\section{Least Angle Nonlinear Distributed  feature  selection}
We first formulate the supervised feature selection problem and propose the \emph{Least Angle Nonlinear Distributed} (LAND) feature selection. 
\subsection{Problem Formulation}
Let $\boldX = [\boldx_1, \ldots, \boldx_n] = [\boldu_1, \ldots, \boldu_d]^\top \in \mathbbR^{d \times n}$ denotes the input data, a matrix where a column $\boldx_i \in \mathbbR^d$ represents an observation vector composed of $d$ elements (features) and a row $\boldu_j \in \mathbbR^n$ represents a feature vector composed of $n$ elements (observations).  
Let $\boldy = [y_1, \ldots, y_n]^\top \in \mathbbR^{n}$ denotes the output data or labels
so that $y_i \in \calY$ is the label for $\boldx_i$.
The output domain $\calY$ can be either continuous (as in regression problems) or categorical (as in classification problems). 

The goal of supervised feature selection is to find $m$ features ($m \ll d$) that are most relevant for predicting the output  $\boldy$ for observations $\boldX$.

To efficiently solve a large and high-dimensional feature selection problem, next we propose a \emph{nonlinear} extension of LARS~\cite{efron2004least} leveraging HSIC~\cite{ALT:Gretton+etal:2005}. Then, we introduce an approximation to reduce the memory and computational requirements of the algorithm. This approximation enables our feature selection method to be deployed on a distributed computing platform, scaling up to big biological data. 

\subsection{{HSIC Lasso with Least Angle Regression}} 

Let us define the kernel (similarity) matrix of the $k$-th feature observations
\begin{align*}
[\boldK^{(k)}]_{ij} &= K(u_{ki}, u_{kj}),\hspace{0.3cm} i,j = 1, \ldots, n,
\end{align*}
and outputs
\begin{align*}
[\boldL]_{ij} &= L(y_i,y_j),~~\hspace{0.3cm}i,j=1, \ldots, n,
\end{align*}
where $u_{ki}$ is the $i$-th element of $k$-th feature vector $\boldu_k$ and $K(u,u')$ and $L(y,y')$ are kernel functions. In principle, any universal kernel function such as the Gaussian or Laplacian kernels can be used \cite{ALT:Gretton+etal:2005}. Here, we first normalize feature $\boldu$ to have unit standard deviation and then use the Gaussian kernel \[
K(u,u') = \exp \left(-\frac{(u - u' )^2}{2\sigma_{\mathrm u}^2} \right),
\]
where $\sigma_\mathrm{u}$ is the kernel width. 

For the outputs, in regression cases ($y \in \mathbbR$) we similarly normalize $y$ to have unit standard deviation and then use the Gaussian kernel \[
L(y,y') = \exp \left(-\frac{( y - y' )^2}{2\sigma_{\mathrm y}^2} \right).
\]
In this paper, we use $\sigma_{\mathrm{u}} =1$ and $\sigma_{\mathrm{y}} = 1$. 
In classification cases (i.e., $y$ is categorical) we use the delta kernel, which has been shown to be useful for multi-class problems~\cite{song2012feature}: 
\begin{eqnarray*}
L(y,y') = \left\{ \begin{array}{ll}
{1}/{n_{y}} & \textnormal{if}\hspace{0.3cm} y = y' \\
0 & \textnormal{otherwise}, \\
\end{array} \right.
\end{eqnarray*} 
 where $n_{y}$ is the number of observations in class $y$.   

LAND {(HSIC Lasso)} is formulated as \cite{yamada2014high}
\begin{align}
\label{eq:land}
\begin{split}
\min_{\boldalpha \in \mathbbR^d} & \hspace{0.3cm}
\left\|\widetilde{\boldL}\! -\! \sum_{k = 1}^{d} \alpha_k \widetilde{\boldK}^{(k)} \right\|^2_{\textnormal{F}} + \lambda \|\boldalpha\|_1, \\
\textnormal{s.t.} & \hspace{0.3cm} \alpha_1,\ldots,\alpha_d \geq 0,
\end{split}
\end{align}
where $\lambda \geq 0$ is a regularization parameter, $\|\cdot\|_1$ is the $\ell_1$ norm, $\|\cdot\|_F$ is the Frobenius norm ($\|\boldM\|_{\textnormal{F}} = \sqrt{\text{tr}(\boldM\boldM^\top)}$), and $\widetilde{\boldK}$ and $\widetilde{\boldL}$ are the normalized kernel matrices such that $\boldone_n^\top \widetilde{\boldK}\boldone_n = \boldone_n^\top \widetilde{\boldL}\boldone_n = 0$ and $\|\widetilde{\boldK}\|_{\text{F}}^2 = \|\widetilde{\boldL}\|_{\text{F}}^2 = 1$. {In this paper, we employ the least angle regression \cite{efron2004least} to solve Eq. \eqref{eq:land} (See Algorithm \ref{alg:land}), and we name the LARS variant of Eq. \eqref{eq:land} as \emph{Least Angle Nonlinear Distributed} (LAND). }  
  
The solution of the LAND problem enables the selection of the most relevant, least redundant features. To illustrate why, 
we can rewrite the objective function in Eq.\eqref{eq:land} as: 
\begin{align}
1 - 2\sum_{k = 1}^d\alpha_k {\textnormal{NHSIC}}(\boldu_k,\boldy) + \sum_{k,l = 1}^d \alpha_k \alpha_l {\textnormal{NHSIC}}(\boldu_k,\boldu_l),
\end{align}
where $\text{NHSIC}(\boldu, \boldy) = \text{tr}(\widetilde{\boldK} \widetilde{\boldL})$ is the normalized version of HSIC~\cite{DBLP:journals/jmlr/CortesMR12}, an independence measure such that ${\textnormal{NHSIC}}(\boldu,\boldy) = 1$ if $\boldu = \boldy$ and ${\textnormal{NHSIC}}(\boldu,\boldy) = 0$ if and only if the two random variables $\boldu$ and $\boldy$ are independent (see proof in Section~\ref{sec:proof_full}). In the original paper \cite{yamada2014high}, the un-normalized HSIC was employed. However, since the un-normalized HSIC takes some positive number when input and output variables are dependent. That is, $\text{HSIC}(\boldu_k,\boldy) > \text{HSIC}(\boldu_{k'},\boldy)$ may not mean that $\boldu_k$ is more highly associated with $\boldy$ than $\boldu_{k'}$. Thus, it is natural to normalize HSIC for feature selection problems.

If output $\boldy$ has high dependence on the $k$-th feature $\boldu_k$,   ${\textnormal{NHSIC}}(\boldu_k,\boldy)$ is large and thus $\alpha_k$ should also be large, meaning that the feature should be selected. On the other hand, if $\boldu_k$ and $\boldy$ are independent, ${\textnormal{NHSIC}}(\boldu_k,\boldy)$ is close to zero; $\alpha_k$ should thus be small and the $k$-th feature will not be selected. Furthermore, if $\boldu_k$ and $\boldu_l$ are strongly dependent on each other, ${\textnormal{NHSIC}}(\boldu_k,\boldu_l)$ is large and thus either $\alpha_k$ or $\alpha_l$ will be small; only one of the redundant features will be selected. 

In practice, LAND iteratively selects non-redundant features with a strong relevance for determining the output. To select the $k$-th feature we first need to consider its relevance with respect to the output, indicated by $\text{NHSIC}(\boldu_k, \boldy)$. Second, a feature is discounted based on its redundancy with respect to previously selected features, given by $\sum_{i: \alpha_i > 0} \alpha_i \text{NHSIC}(\boldu_k, \boldu_i)$\sloppy. Hence we define the \emph{selection score} of the $k$-th feature as $c_k = \text{NHSIC}(\boldu_k, \boldy) - \sum_{i: \alpha_i > 0} \alpha_i \text{NHSIC}(\boldu_k, \boldu_i)$. After the feature is selected, we update the $\alpha$ coefficients. 

A key challenge of solving problem~\eqref{eq:land} is that it requires huge memory ($O(dn^2)$) to store all kernel matrices $\widetilde{\boldK}^{(k)}$. For example in the enzyme dataset described below ($d=$1,062,420, $n=$15,328), the naive implementation 
requires more than a petabyte of memory, which is not feasible. Moreover, the computing time for matrix multiplications scales as $O(mdn^3)$, making it unfeasible when both $d$ and $n$ are large. We address these issues by applying a kernel approximation.  

\subsection{Nystr\"{o}m Approximation for NHSIC}

The Nystr\"{o}m approximation~\cite{book:Schoelkopf+Smola:2002} allows us to rewrite $\text{NHSIC}(\boldu,\boldy) = \text{tr}(\widetilde{\boldK}\widetilde{\boldL}) \approx \text{tr}(\boldF\boldF^\top \boldG\boldG^\top)$, where, in the regression case, 
\begin{align*}
\boldF &= \boldGamma \boldK_{nb}\boldK_{bb}^{-1/2}/(\text{tr}((\boldK_{bb}^{-1/2}\boldK_{nb}^\top\boldK_{nb} \boldK_{bb}^{-1/2})^2))^{1/4}, \\
\boldG &= \boldGamma \boldL_{nb}\boldL_{bb}^{-1/2}/(\text{tr}((\boldL_{bb}^{-1/2}\boldL_{nb}^\top \boldL_{nb}\boldL_{bb}^{-1/2})^2))^{1/4}.
\end{align*}
Here, $\boldF\boldF^\top$ is a low-rank approximation of $\widetilde{\boldK}$, such that $\boldK_{nb} \in \mathbbR^{n \times b}$ and $[\boldK_{nb}]_{ij} = K(u_{i},u_{b,j})$, where $\boldu_b \in \mathbbR^b$ is a basis vector (See the experimental section for more details). Analogously,  $\boldK_{bb} \in \mathbbR^{b \times b}$, $\boldL_{nb} \in \mathbbR^{n \times b}$, and $\boldL_{bb} \in \mathbbR^{b \times b}$. The parameter $b$ is an upper bound on the rank of the $\boldK_{nb}$ and $\boldL_{nb}$ matrices. The higher $b$, the better the approximation, but the higher the computational and memory costs. If the number of observations $n$ is very large, we can make the problem tractable by using $b \ll n$ without sacrificing the predictive power of the selected features, as shown the next section. 
The resulting complexity of kernel computation and multiplication for selecting $m$ features is $O(dbn + mdb^2n) = O(mdb^2n)$. Moreover, for each dimension, we only need to store the $\boldF \in \mathbbR^{b\times n}$ matrix, yielding space complexity $O(dbn)$. The approximation reduces the overall time complexity of the algorithm by a factor $O(n^2/b^2)$ and the memory requirements by a factor $O(n/b)$ (see Table~1). Note that, the original LAND formulation Eq.~\eqref{eq:land} can find a globally optimal solution without the Nystr\"{o}m approximation. However, in practice, since each kernel Gram matrix (i.e., $\boldK^(k)$) is computed from only one feature, we can accurately approximate each kernel Gram matrix by the Nystr\"{o}m approximation. Thus, we can empirically find a good solution if we set the number of bases in the Nystr\"{o}m approximation relatively large (in this paper, we found b = 10, 20 works well).

\begin{table}
\centering
\caption{Summary of computational complexity and memory size of different implementations of LAND. The total time complexity is obtained by adding the kernel computation and multiplication times, which are dominated by the latter.\label{tab:complexity}}
\begin{tabular}{lccc}
\hline
Method & Kernel & Multiplication  & Memory  \\ \hline
Na\"{i}ve &  $O(dn^2)$ & $O(mdn^3)$  & $O(dn^2)$ \\ 
Nystr\"{o}m  & $O(dbn)$  &  $O(mb^2dn)$ &  $O(dbn)$\\ 
MapReduce & $O(dbn/M)$  &  $O((mb^2/M)dn)$ & $O(dbn)$\\
\hline
\end{tabular}
\end{table}


In the classification case, we can use the above technique to approximate the kernel matrix $\boldF$ and compute $\boldG$ as
\begin{align*}
\boldG_{k,j} &= \left\{ \begin{array}{ll}
\frac{1}{\sqrt{n_k}}& \text{if}\hspace{0.3cm}k = y_{j} \\
0& \text{otherwise},\\
\end{array} \right. 
\end{align*}
where $\boldG \in \mathbbR^{C \times n}$ and $C$ is the number of classes.  The computational complexity of kernel computation and multiplication is $O(mbdn(b+C))$ and the memory complexity is $O(dn(b+C))$. These too are dramatic reductions in computational time and memory.  


\subsection{Distributed computation}

While the Nystr\"{o}m approximation is useful for data with many observations (large $n$), the computational cost of LAND makes it unfeasible on a single computer for ultra high-dimensional cases, i.e., when the number of features is extremely large (e.g., $d \geq 10^6$). Fortunately,  we can compute the kernel matrices $\{\boldF_k\}_{k = 1}^d$ in parallel. The selection scores $c_k$ can be computed independently as well. These properties make it possible to further speed up LAND with a distributed computing framework. 
The resulting  computational complexity is $O(mdb^2n/M)$, where $M$ is the number of mappers (Table~\ref{tab:complexity}). 


The proposed LAND algorithm is implemented on a cluster for scalability to large datasets.  Map-Reduce is a widely adopted distributed computing framework. It consists of a \emph{map} procedure that breaks up the problem into many small tasks that can be performed in parallel, and distributes these tasks to multiple computing nodes (\emph{mappers}). A \emph{reduce} procedure then is executed on multiple nodes (\emph{reducers}) to aggregate the computed results. 
For example, we denote the map function with inputs $\{\boldF_k\}_{k = 1}^d, \boldG$ and the corresponding reduce function as
\begin{align*}
&\text{map}(\{\boldF_k\}_{k = 1}^d, \boldG : \, \langle k, \text{tr}((\boldF_k^\top \boldG)^2 ) \rangle ) \\
&\text{reduce}( \langle k, \, \text{tr}((\boldF_k^\top \boldG)^2 ) \rangle : \, f_k = \text{tr}((\boldF_k^\top \boldG)^2) )
\end{align*}
where the map function returns key-value pairs and the reduce function stores the key-value pairs into a vector $f_k$. 

We employ two Map-Reduce frameworks. Hadoop (\url{hadoop. apache.org}) is used for computing $\boldF_k$'s in the Nystr\"{o}m approximation. Spark (\url{spark.apache.org}) reduces the data access cost by storing intermediate results in memory, and is used for the iteration operations. 
Our Map-Reduce implementation is shown in Algorithm 1. 

We use $b = 20$ for the p53 data and the prostate cancer data, and $b = 10$ for the enzyme data.

\subsection{Relation to high-dimensional feature screening method}
\label{sec:proof_full}
Let us establish a relation between the proposed method, LAND, and the feature screening method~[2]. 
Feature screening is a maximum relevance~[15] approach  used widely in the statistics community.
It aims to select a subset of features with the goal of dimensionality reduction, without affecting the statistical properties of the data. 
The idea is to rank the covariates between the input variables $\boldu$ and the output response $\boldy$ according to some degree of dependence. For example, one can choose $\text{NHSIC}$ as an independence measure and rank the $d$ features $\boldu_1,\ldots,\boldu_d$ according to the values of $\text{NHSIC}(\boldu_k,\boldy)$. The top $m$ features are then selected. The MR-NHSIC baseline can be regarded as a feature screening method.



\begin{prop}
If any pair of features $\boldu_k$ and $\boldu_{k'}$ are assumed to be independent, then there exists a pair $(\lambda, m)$ such that the top $m$ features obtained by the feature screening method~[2] 
are the same of those obtained by solving Eq.~\ref{eq:land}.
\end{prop}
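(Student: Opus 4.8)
The plan is to use the quadratic reformulation of the LAND objective given just before the proposition, together with the independence hypothesis, to show that problem~\eqref{eq:land} \emph{decouples} across features, so that its minimizer is available in closed form and its support is precisely a ``top-$m$'' set in the sense of feature screening.

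First I would expand the Frobenius norm in~\eqref{eq:land} and use the normalizations $\|\widetilde{\boldL}\|_{\textnormal{F}}^2 = \|\widetilde{\boldK}^{(k)}\|_{\textnormal{F}}^2 = 1$ to write the objective, on the feasible set $\alpha_k \ge 0$ (where $\|\boldalpha\|_1 = \sum_k \alpha_k$), as
\[
1 - 2\sum_{k=1}^d \alpha_k\,\textnormal{NHSIC}(\boldu_k,\boldy) + \sum_{k=1}^d \alpha_k^2 + \sum_{k\neq l} \alpha_k\alpha_l\,\textnormal{NHSIC}(\boldu_k,\boldu_l) + \lambda\sum_{k=1}^d \alpha_k ,
\]
where I used $\textnormal{NHSIC}(\boldu_k,\boldu_k) = \textnormal{tr}\big((\widetilde{\boldK}^{(k)})^2\big) = \|\widetilde{\boldK}^{(k)}\|_{\textnormal{F}}^2 = 1$ to pull the diagonal terms out of the double sum. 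Invoking the hypothesis, $\textnormal{NHSIC}(\boldu_k,\boldu_{k'}) = 0$ for all $k\neq k'$, so the cross term vanishes and the objective reduces to the separable form
\[
1 + \sum_{k=1}^d \Big( \alpha_k^2 - \big(2\,\textnormal{NHSIC}(\boldu_k,\boldy) - \lambda\big)\alpha_k \Big),
\]
a sum of $d$ scalar strictly convex problems over $\alpha_k \ge 0$. Minimizing each summand gives $\alphah_k = \max\!\big(0,\, \textnormal{NHSIC}(\boldu_k,\boldy) - \lambda/2\big)$, so the feature set selected by LAND with parameter $\lambda$ is $S_\lambda = \{\, k : \textnormal{NHSIC}(\boldu_k,\boldy) > \lambda/2 \,\}$.

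It then remains to match this with feature screening. Sorting features by relevance, $\textnormal{NHSIC}(\boldu_{(1)},\boldy) \ge \cdots \ge \textnormal{NHSIC}(\boldu_{(d)},\boldy)$, for any $\lambda \ge 0$ the set $S_\lambda$ has the form $\{(1),\dots,(m)\}$ with $m = |S_\lambda|$, i.e.\ it is exactly the output of screening that retains $m$ features; conversely, given a target $m$ with $\textnormal{NHSIC}(\boldu_{(m)},\boldy) > \textnormal{NHSIC}(\boldu_{(m+1)},\boldy)$, any $\lambda$ with $2\,\textnormal{NHSIC}(\boldu_{(m+1)},\boldy) \le \lambda < 2\,\textnormal{NHSIC}(\boldu_{(m)},\boldy)$ (and $\lambda$ below $2\min_k \textnormal{NHSIC}(\boldu_k,\boldy)$ for $m=d$) makes $S_\lambda$ coincide with the top $m$ screened features. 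Either direction produces a pair $(\lambda,m)$ as claimed.

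The norm expansion and the one-dimensional minimization are routine; the only point requiring care is bookkeeping at the threshold $\lambda/2$ — ties among the values $\textnormal{NHSIC}(\boldu_k,\boldy)$, and the ``$>$'' versus ``$\ge$'' convention at equality — but this is precisely the ambiguity already inherent in the definition of the ``top $m$'' features, so it is harmless, and I would simply state the matching under the (generic) assumption that the $m$-th and $(m+1)$-th largest relevance scores differ.
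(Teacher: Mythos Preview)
Your proposal is correct and follows the same overall strategy as the paper: use independence to make the cross terms $\textnormal{NHSIC}(\boldu_k,\boldu_l)$ vanish, obtaining a separable problem whose solution is ordered by the relevance scores $\textnormal{NHSIC}(\boldu_k,\boldy)$. The paper arrives at the same reduced problem (written there as a maximization with a $\tfrac{1}{2}\|\boldalpha\|_2^2$ penalty) but finishes differently: instead of your explicit closed form $\alphah_k = \max(0,\,\textnormal{NHSIC}(\boldu_k,\boldy)-\lambda/2)$, it argues by a swap/contradiction that the largest $\alpha_k$ must correspond to the largest $\textnormal{NHSIC}(\boldu_k,\boldy)$. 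Your route is slightly more informative, since the closed form directly identifies the support as a super-level set and makes the construction of a matching pair $(\lambda,m)$ explicit, whereas the paper's swap argument establishes the ordering but leaves the existence of $(\lambda,m)$ implicit.
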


\begin{proof}
According to 
Theorem 4 by Gretton et al.~[14], $\text{HSIC}(\boldu_k, \boldu_{k'})=0$ if and only if two features $\boldu_k$ and $\boldu_{k'}$ are independent. Hence, if the pair of features  $\boldu_k$ and $\boldu_{k'}$ is  independent, we have the following result using the definition of NHSIC: 
\begin{align}
\label{eq:NHSIC_IND}
\text{NHSIC}(\boldu_k, \boldu_{k'}) &= \text{tr}(\bar{\boldK}^{(k)} \bar{\boldK}^{(k')}) \nonumber \\&
= \frac{\text{HSIC}(\boldu_k,\boldu_{k'})}{\sqrt{\text{tr}(\bar{\boldK}^{(k)}\bar{\boldK}^{(k)} )} \sqrt{\text{tr}(\bar{\boldK}^{(k')} \bar{\boldK}^{(k')})}} \nonumber \\
&= \left\{ 
\begin{array}{l}
0~~~~~\text{if $k \neq k'$} \\
1~~~~~\text{if $k = k'$.}
\end{array}
\right.
\end{align}
Since the two features $\boldu_k$ and $\boldu_{k'}$ are assumed to be independent (i.e., \linebreak $\|\sum_{k = 1}^{d} \alpha_k \widetilde{\boldK}^{(k)}\|^2_{\text{F}} = \|\boldalpha\|_2^2$) and $\|\bar{\boldL}\|^2_{\text{F}} = \text{NHSIC}(\boldy,\boldy) =  1$ by the definition of NHSIC (Eq.~\eqref{eq:NHSIC_IND}), the optimization problem in Eq.~\eqref{eq:land} is equivalent to:
\begin{align}
\label{eq:prop}
\begin{split}
\max_{\boldalpha \in \mathbbR^{d}} &\hspace{0.3cm} \sum_{k=1}^d \alpha_k \text{NHSIC}(\boldu_k, \boldy) - \frac{1}{2}\|\boldalpha\|_2^2 - \frac{\lambda}{2} \|\boldalpha\|_1,\\
\text{s.t.} &\hspace{0.3cm} \alpha_1,\ldots,\alpha_d \geq 0. 
\end{split}
\end{align} 
Next we prove by contradiction that the largest NHSIC values correspond to the largest $\alpha_k$ values in the solution of Eq.~\ref{eq:prop}. Suppose there exists a pair $(i,j)$ such that $\text{NHSIC}(\boldu_i,\boldy) > \text{NHSIC}(\boldu_j,\boldy)$ and $\alpha_i < \alpha_j$. Then one can simply switch the values of $\alpha_i$ and $\alpha_j$ to obtain a higher value in the objective function of Eq.~\ref{eq:prop}. This contradiction proves that the largest $\alpha_k$ correspond to the largest values of $\text{NHSIC}(\boldu_k,\boldy)$.
\end{proof}

The above proposition draws the connection to high-dimensional feature screening~[2]. 
%
Since the feature screening method tends to select redundant features, an iterative screening approach is used to filter out redundant features. 


\begin{algorithm}[t]
	\caption{LAND (MapReduce Spark version)} 
    \label{alg:land}
	\begin{algorithmic}
		\STATE {\bf Initialize}: $\boldalpha = \boldzero_{\text{d}}$, $\calA = []$ (active set), and $\calI = \{1, 2, \ldots, d\}$ (non-active set).
		\STATE Compute $\boldG$ and store it memory.
		\STATE /*Compute $\{\boldF_k\}_{k = 1}^d$ and store them in memory.*/
		\STATE $\{\boldF_k\}_{k = 1}^d = \text{map}(\{\boldu_k\}_{k=1}^d: <k, \boldF_k>)$
		\STATE
		\STATE /* Compute $\text{NHSIC}(\boldu_k, \boldy)$*/
		\STATE $\text{map}(\{\boldF_k\}_{k = 1}^d, \boldG : <k, \text{tr}(\boldF_k^\top \boldG)^2>)$
		\STATE $\text{reduce}(<k, \text{tr}\left((\boldF_k^\top \boldG)^2\right)>: f_k = \text{tr}\left((\boldF_k^\top \boldG)^2\right))$
		\STATE
		\STATE $\boldR = []$
		\STATE /* Select $m$ features */
		\WHILE{$|\calA| < m$}
		\STATE /*Compute $c_k = \text{NHSIC}(\boldu_k, \boldy) - \sum_{i = 1}^d \alpha_i \text{NHSIC}(\boldu_k, \boldu_i)$ */
		\STATE $\boldc = \boldf - \boldR \boldalpha_{\calA}$
		\STATE
		\STATE {\bf Find feature index}: $j = \argmax_{\boldc_{\calI}} c_k > 0$
		\STATE {\bf Update sets}: $\calA = [\calA\hspace{0.3cm} j], \calI = \calI \backslash j$
		\STATE {\bf Update coefficients}:
			\begin{align*}
			\boldalpha_{\calA} &= \boldalpha_{\calA} + \widehat{\mu} \boldQ_{\calA}^{-1} \boldone, \\
			[\boldQ_{\calA}]_{i,j} &= \text{NHSIC}(\boldu_{\calA,i}, \boldu_{\calA,j})\\
			\widehat{\mu} &= \min_{\mu} \left\{ \begin{array}{ll}
			\exists \ell \in \calI: \widetilde{\boldc}_\ell = \boldc_{\calA}& \\
			\boldc_{\calA} = \boldzero& \\
			\end{array} \right.,
			\end{align*}
               	\STATE /*Compute $\{\text{NHSIC}(\boldu_j, \boldu_k)\}_{k = 1}^d$ */
		\STATE $\text{map}(\{\boldF_k\}_{k = 1}^d, \boldF_j : <k, \text{tr}(\boldF_k^\top \boldF_j)^2>)$
		\STATE $\text{reduce}(<k, \text{tr}(\boldF_k^\top \boldF_j)^2>: r_{k,j} =\text{tr}(\boldF_k^\top \boldF_j)^2>)$
		\STATE $\boldR = [\boldR \hspace{0.3cm} \boldr_j]$
               \ENDWHILE
	\end{algorithmic} 
\end{algorithm}
  
\begin{figure}[h]
\begin{center}
\begin{minipage}[t]{0.475\linewidth}
\centering
  {\includegraphics[width=0.99\textwidth]{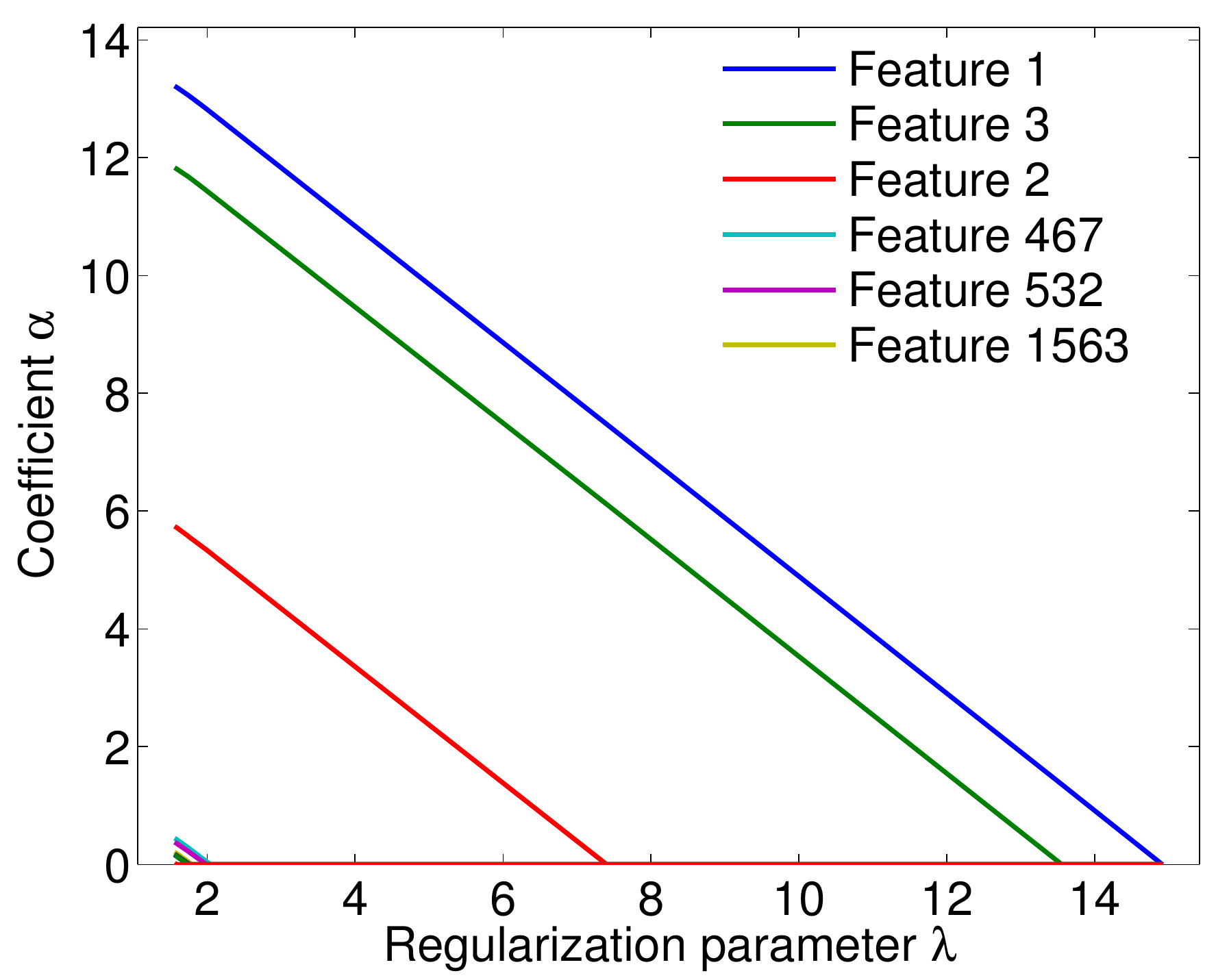}} 
  (a) Regularization path \\ \vspace{-0.10cm}
\end{minipage}
\begin{minipage}[t]{0.49\linewidth}
\centering
  {\includegraphics[width=0.99\textwidth]{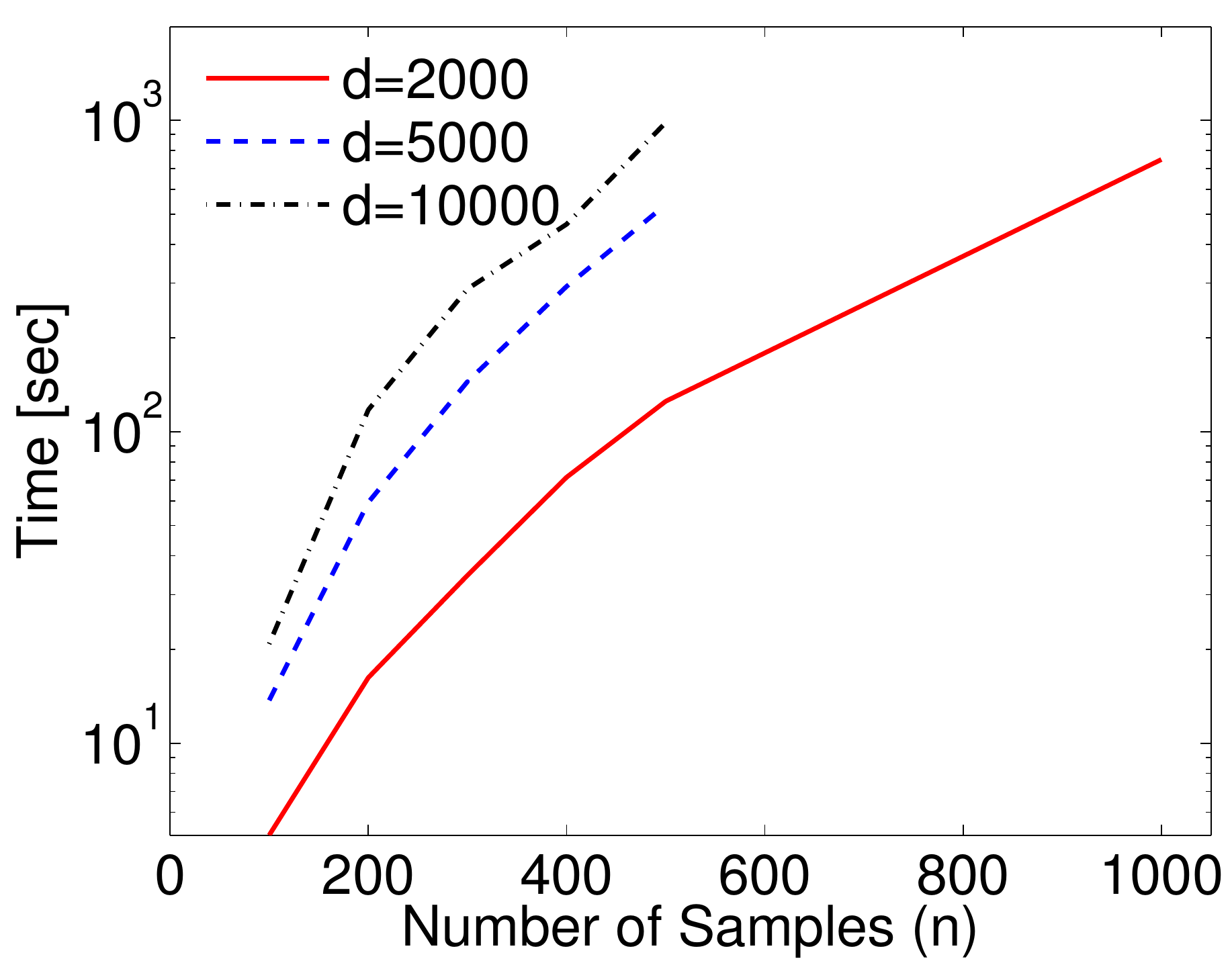}} 
  (b) Computational time \\ \vspace{-0.10cm}
\end{minipage}
\caption{Performance of LAND on synthetic data. (a) The regularization path, which describes the transition of parameters over the regularization parameter $\lambda$ in Eq.~\ref{eq:land}. (b) Computational time (without using the Nystr\"{o}m approximation and the distributed implementation) versus the number of observations $n$, for different values of the dimensionality $d$.}  
    \label{fig:illustrative_example}
\end{center}
\end{figure}

\begin{figure*}[t]
\begin{center} 
\begin{minipage}[t]{0.245\linewidth}
\centering
  {\includegraphics[width=0.99\textwidth]{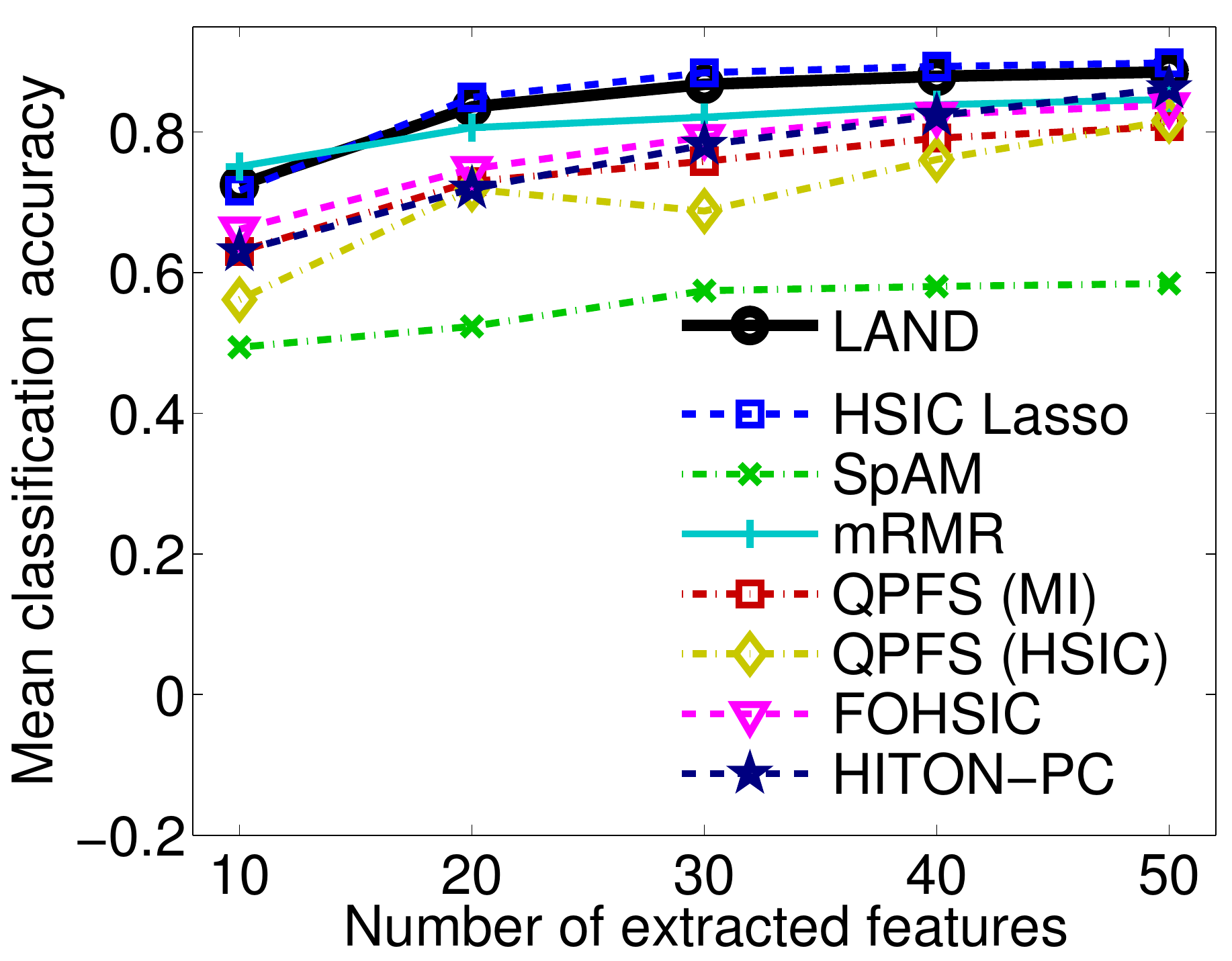}} \\ \vspace{-0.10cm}
(A) AR10P
\end{minipage}
\begin{minipage}[t]{0.245\linewidth}
\centering
  {\includegraphics[width=0.99\textwidth]{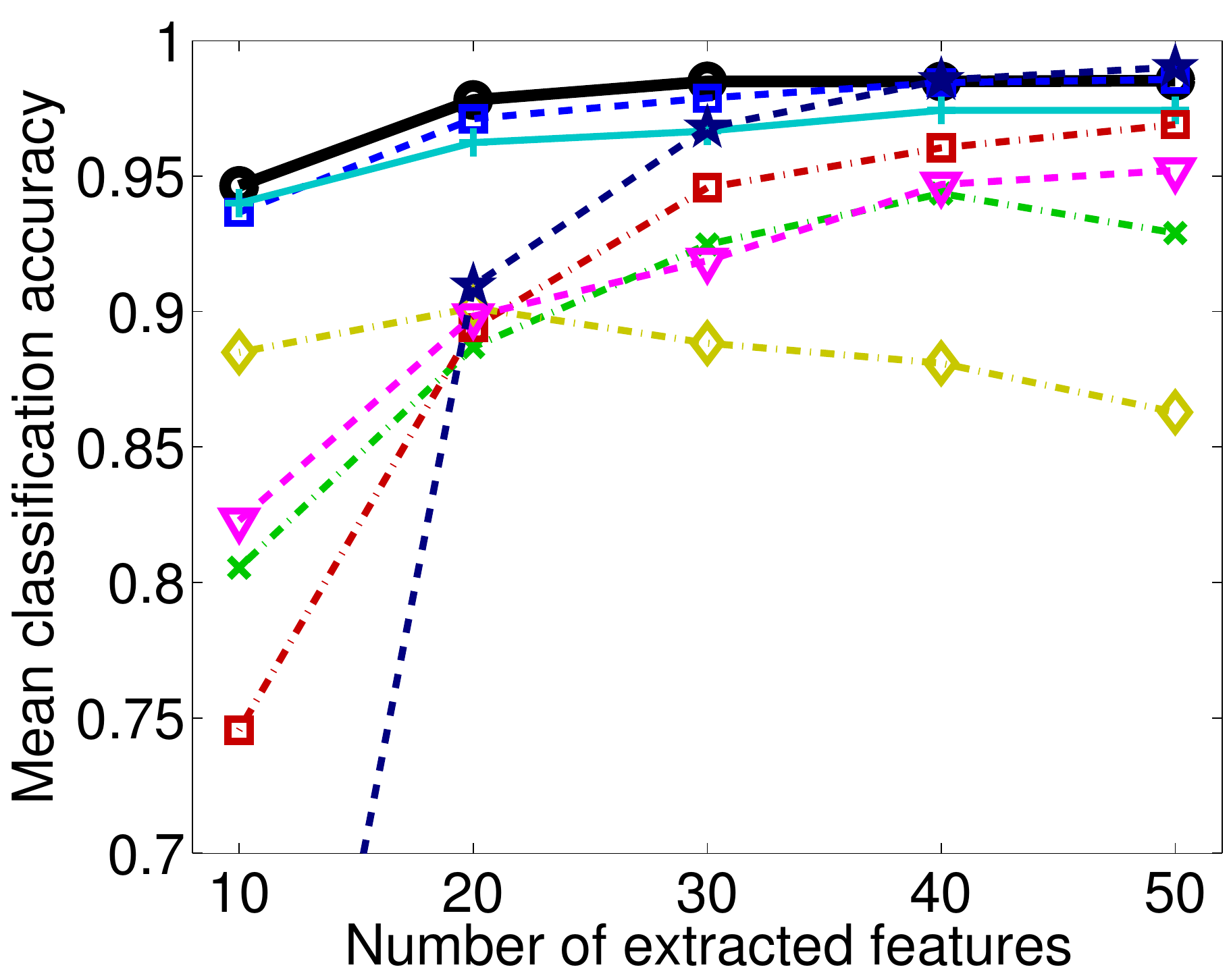}} \\ \vspace{-0.10cm}
(B) PIE10P
\end{minipage}
  \begin{minipage}[t]{0.245\linewidth}
\centering
{\includegraphics[width=0.99\textwidth]{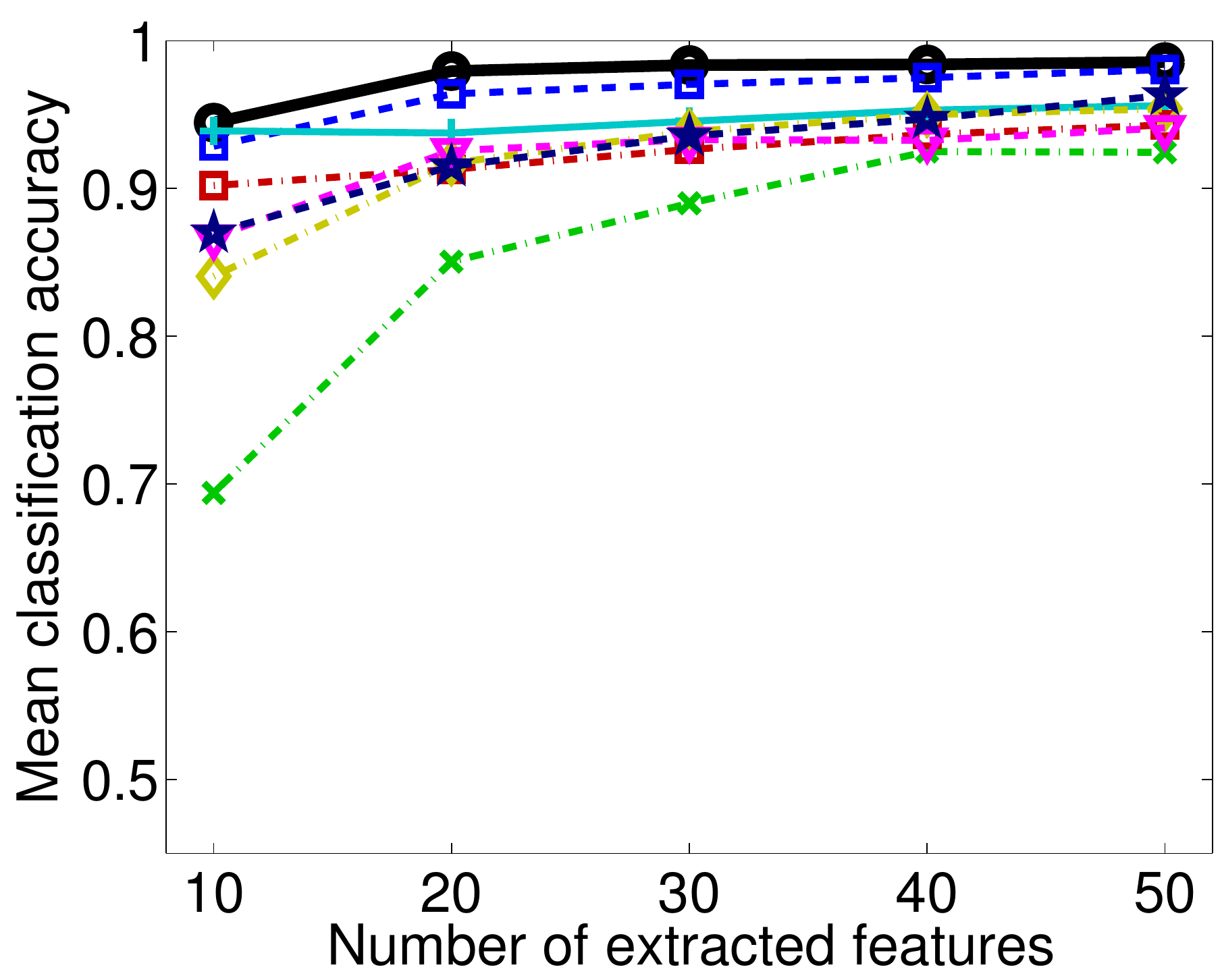}} \\  \vspace{-0.10cm}
(C) PIX10P
  \end{minipage} 
\begin{minipage}[t]{0.245\linewidth}
\centering
  {\includegraphics[width=0.99\textwidth]{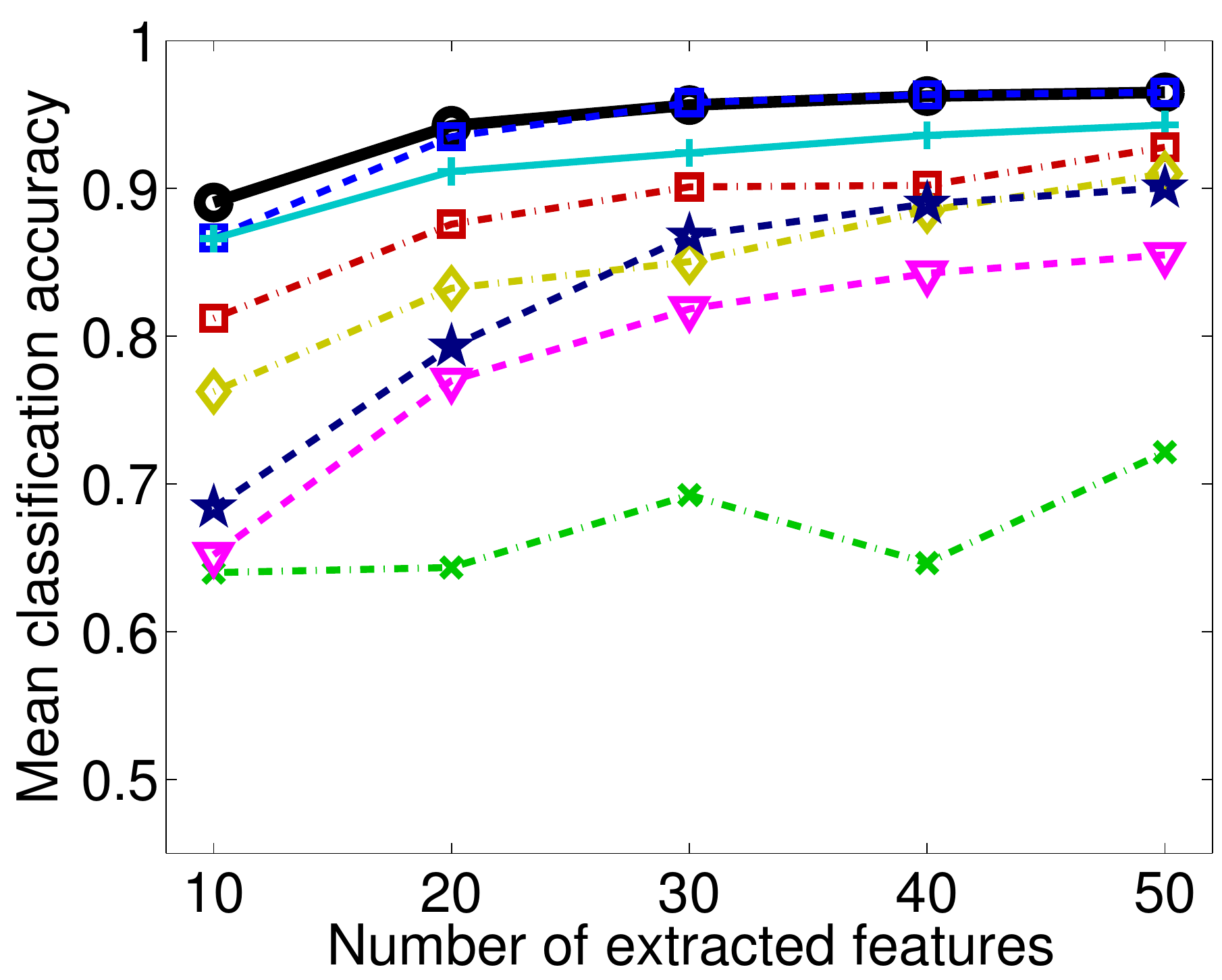}} \\ \vspace{-0.10cm}
(D) ORL10P
\end{minipage}
\begin{minipage}[t]{0.245\linewidth}
\centering
  {\includegraphics[width=0.99\textwidth]{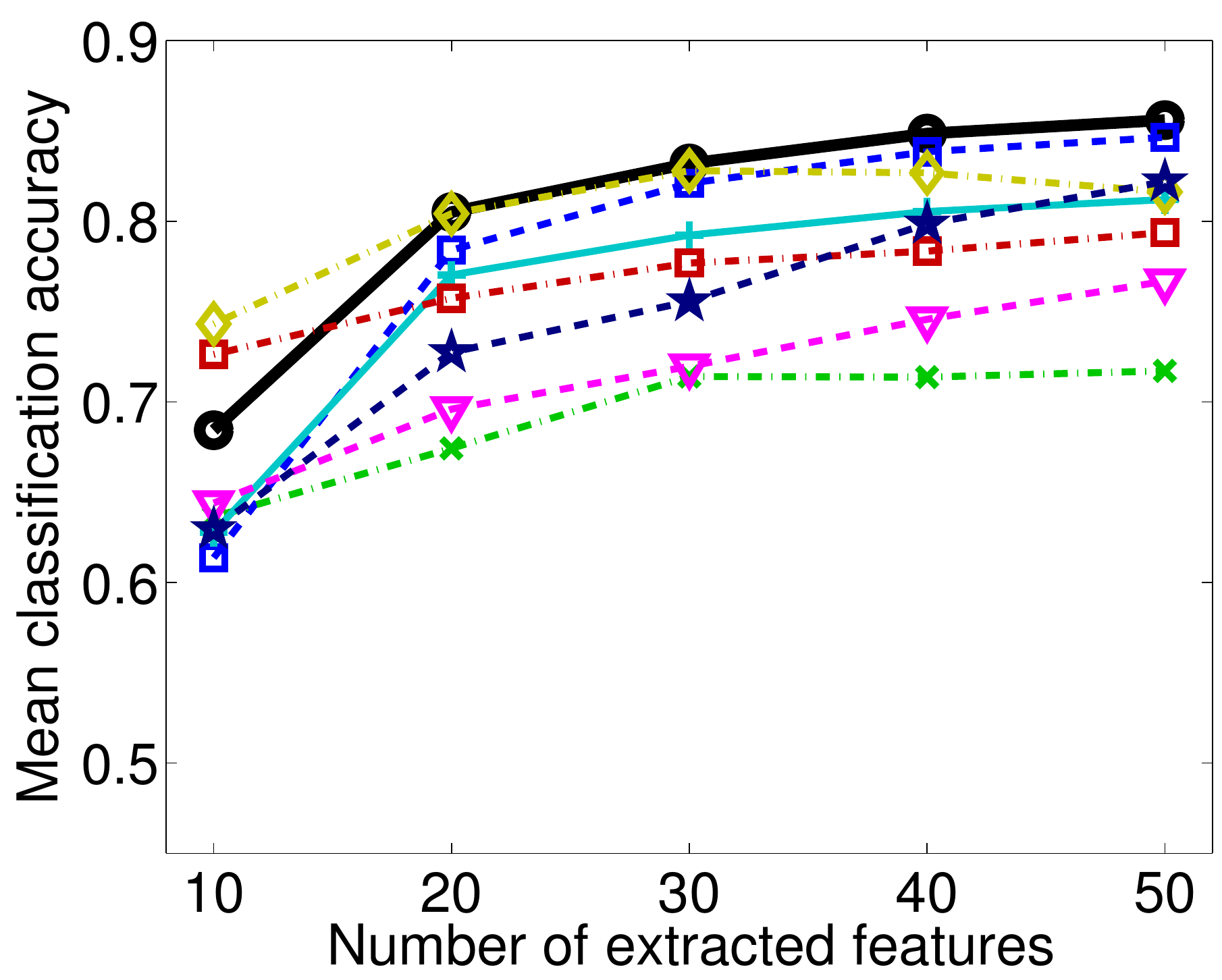}} \\ \vspace{-0.10cm}
(E) TOX
\end{minipage}
  \begin{minipage}[t]{0.245\linewidth}
\centering
{\includegraphics[width=0.99\textwidth]{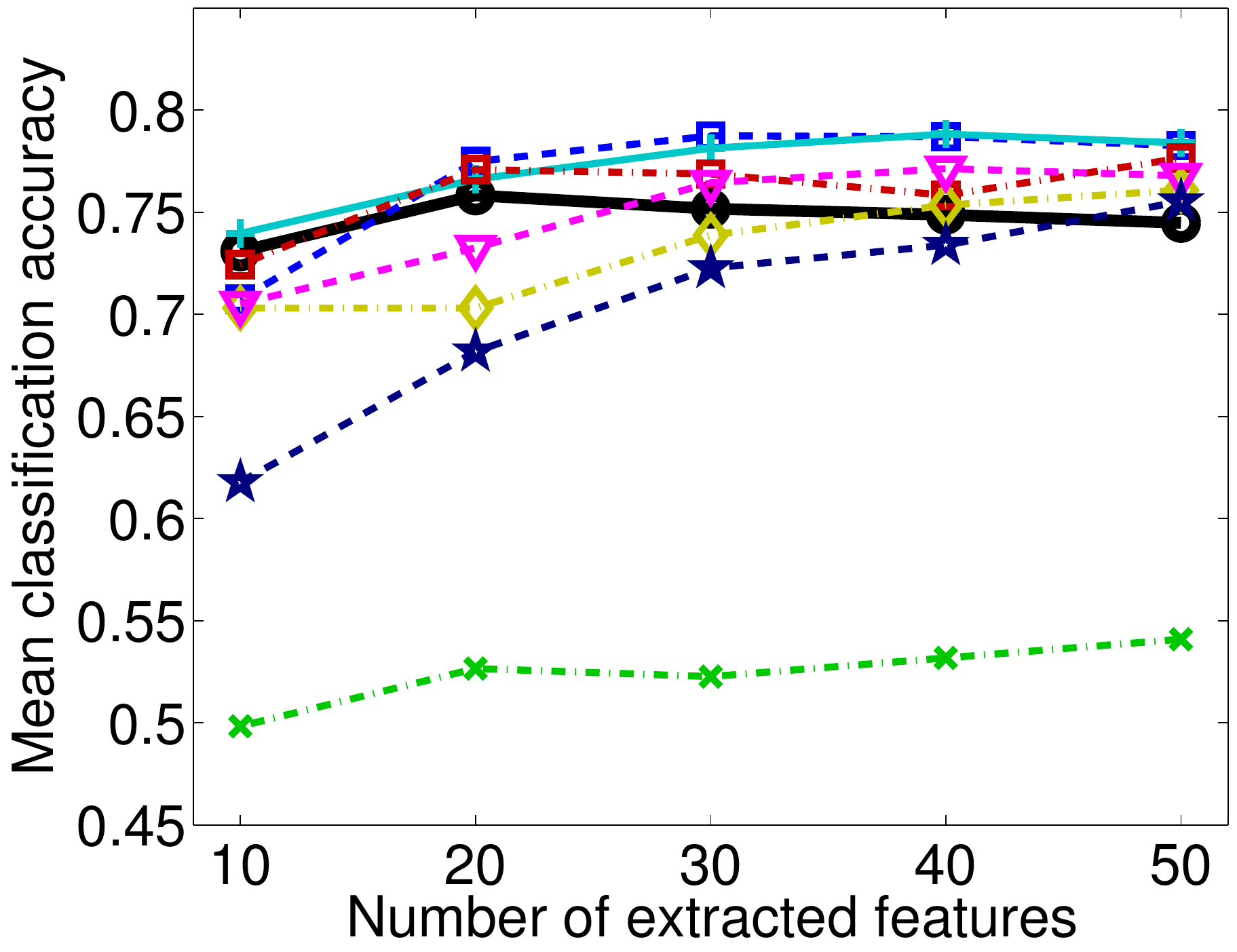}} \\  \vspace{-0.10cm}
(F) CLL-SUB
  \end{minipage}
   \begin{minipage}[t]{0.245\linewidth}
\centering
 {\includegraphics[width=0.99\textwidth]{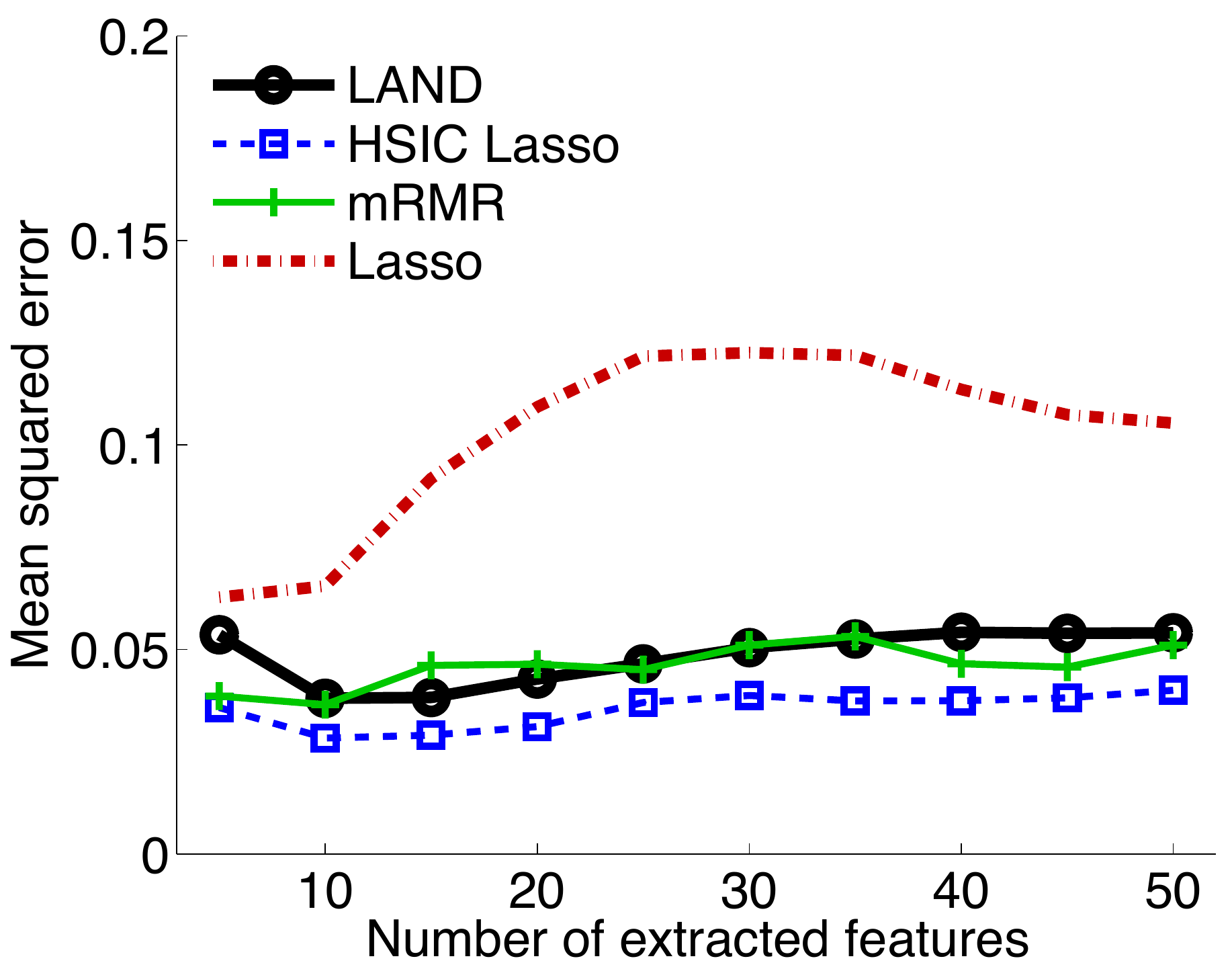}} \\ \vspace{-0.10cm}
(G) TRIM32
  \end{minipage}
 \caption{The results for the benchmark datasets. (A)-(F): Mean classification accuracy for six classification benchmark datasets. The horizontal axis denotes the number of selected features, and the vertical axis denotes the mean classification accuracy. (G): Mean squared error for the TRIM32 data. The horizontal axis denotes the number of selected features, and the vertical axis denotes the mean squared error (lower is better).} 
    \label{fig:result_ASU}
\end{center}
\end{figure*}

\section{Experiments}
\label{sec:experiments}

We first illustrate LAND on a synthetic data and small-scale benchmark datasets. Then, we evaluate LAND using three biological datasets with $d$ ranging from thousands to over a million features. 

\subsection{Evaluation metrics}
We employ the average area under the ROC curve (\emph{AUC}) as a measure of accuracy that is robust with respect to unbalanced classes~\cite{AUC}. Values above 0.5 indicate better-than-random performance; one signifies perfect accuracy.

Let us also define the \emph{dimensionality reduction rate} as $1 - \frac{m}{d}$ where zero represents the original full set of features and higher values indicate smaller sets of selected features. 

Finally, to check whether an algorithm can successfully select non-redundant features, we define the   \emph{independence rate}: 
\begin{align*}
I = 1 - \frac{1}{m(m-1)} \sum_{\boldu_k, \boldu_l, k > l} |\rho_{k,l}|,
\end{align*}
 where $\rho_{k,l}$ is the Pearson correlation coefficient between the $k$-th and $l$-th features. A large $I$ means that the selected features are more independent, or less redundant. In fact, $I$ is closely related to the redundancy rate~\cite{zhao2010efficient}.

\subsection{Synthetic dataset}

We consider a regression problem from a 2000-dimensional input, where input data $(X_1, \ldots, X_{2000})$ includes three groups of variables. The first group of three variables $(X_1, X_2, X_3)$ are relevant for the output $Y$, which is generated according to the following expression:
\[
Y = X_{1}*\exp(X_{2}) + X_{3} + 0.1*E,
\] 
where $E \sim N(0,1)$.  All variables are normally distributed. In particular, for the first 1000 variables, $(X_1, \ldots, X_{1000})^\top \sim N(\boldzero_{1000},\boldI_{1000})$. Here, $N(\boldmu,\boldSigma)$ denotes the multi-variate Gaussian distribution with mean $\boldmu$ and covariance matrix $\boldSigma$. We define the remaining 1000 variables as: $X_{1001} = X_1 + 0.01*E, \ldots, X_{2000} = X_{1000} + 0.01*E$. The second group of variables $(X_4, \ldots, X_{1000})$ and $(X_{1004}, \ldots, X_{2000})$ are uncorrelated with the output, and therefore irrelevant. The third group $X_{1001}$, $X_{1002}$, and $X_{1003}$ are redundant features of $X_1$, $X_2,$, and $X_3$, respectively.  

Figure~\ref{fig:illustrative_example}(a) shows the regularization path for 10 features, and this illustrates that LAND can select non-redundant features. Figure~\ref{fig:illustrative_example}(b) plots the computational time for LAND on a Xeon 2.4GHz (16 cores) with 24GB memory. As can be seen, the computational cost of LAND without using the Nystr\"{o}m approximation and distributed computing increases dramatically with the number of observations. Moreover, since LAND needs $O(dn^2)$ memory space, it is not possible to solve LAND even if the number of observations is small ($n=1000$). Thus, the Nystr\"{o}m approximation and distributed computation are necessary for the proposed method to solve high-dimensional and large sample cases. 

\subsection{Benchmark datasets}

Here, we evaluate the accuracy of LAND using real-world benchmark
datasets (see Table~\ref{tab:feat_data_bench} for details).  

\subsubsection{Classification}


\begin{table}[t]
\centering
\caption{Summary of benchmark datasets.}
\label{tab:feat_data_bench}
\begin{tabular}{clcc}
\hline
Type & Dataset & Features ($d$) & Samples ($n$)  \\ \hline
 & AR10P    & 2400    & 130  \\
& PIE10P   & 2400    & 210  \\
Classification& PIX10P   & 10000   & 100   \\
& ORL10P   & 10000   & 100   \\ 
 & TOX      & 5748    & 171  \\
& CLL-SUB  & 11340   & 111   \\  \hline
Regression & TRIM32 & 31098 & 120 \\ \hline
\end{tabular}
\end{table}

We first consider classification benchmarks with relatively small $d$ and $n$,\footnote{\url{http://featureselection.asu.edu/datasets.php}} allowing us to compare LAND with several baseline methods that are computationally slow. For these classification experiments, we use 80\% of samples for training and the rest for testing. 
In each experiment, we apply feature selection on the training data to select the top $m = 10, 20, \ldots, 50$ features and then measure accuracy using the selected features in the test data. 
We run the classification experiments 100 times by randomly selecting training and test samples and report the average classification accuracy. 
Since all datasets are multi-class, we use multi-class kernel logistic regression (KLR)~\cite{yamada2010semi}. 
For KLR we use a Gaussian kernel where the kernel width and the regularization parameter are chosen based on 3-fold cross-validation.

Figures~\ref{fig:result_ASU}(A)-(F) show the average classification accuracy versus the number of selected features.  With the single exception of the CLL-SUB benchmark, LAND compares favorably with all baselines, including HSIC Lasso, a state-of-the-art high-dimensional feature selection method.  

\subsubsection{Regression}

Our last benchmark is the Affymetric GeneChip Rat Genome 230 2.0 Array dataset~\cite{scheetz2006regulation}. 
In this dataset, there are 120 rat subjects ($n=120$). The real-valued expression for over 30 thousand genes ($d=31098$) in the mammalian eye is important for eye disease. In this paper, we focus on finding genes that are related to the TRIM32 gene~\cite{scheetz2006regulation,huang2010variable}, 
which was recently found to cause the Bardet-Biedl syndrome. 

For this regression experiment, we use 80$\%$ of samples for training and the rest for testing. 
We again select  the top $m = 10, 20, \ldots, 50$ features having the largest absolute regression coefficients in the training data. 
As earlier, we run the regression experiments 100 times by randomly selecting training and test samples, and compute the average mean squared error. We employ  kernel regression~\cite{book:Schoelkopf+Smola:2002}
with the Gaussian kernel. 
The Gaussian width and the regularization parameter are chosen based on 3-fold cross-validation.  
In this experiment, most existing methods are too slow to finish. Thus, we only include the LAND, HSIC Lasso, linear Lasso, and  mRMR results.

Figure~\ref{fig:result_ASU}(G) shows the mean squared error over 100 runs as a function of the number of selected features. As can be observed, the accuracy obtained with features selected by LAND is better than Lasso, comparable with  mRMR, and slightly worse than HSIC Lasso. 
This is because in this regression experiment, the HSIC measure of independence performs better than NHSIC. If we use HSIC instead of NHSIC, LAND can achieve the same accuracy as HSIC Lasso. 

\begin{figure}[t]
\centerline{\includegraphics[width=0.5\columnwidth]{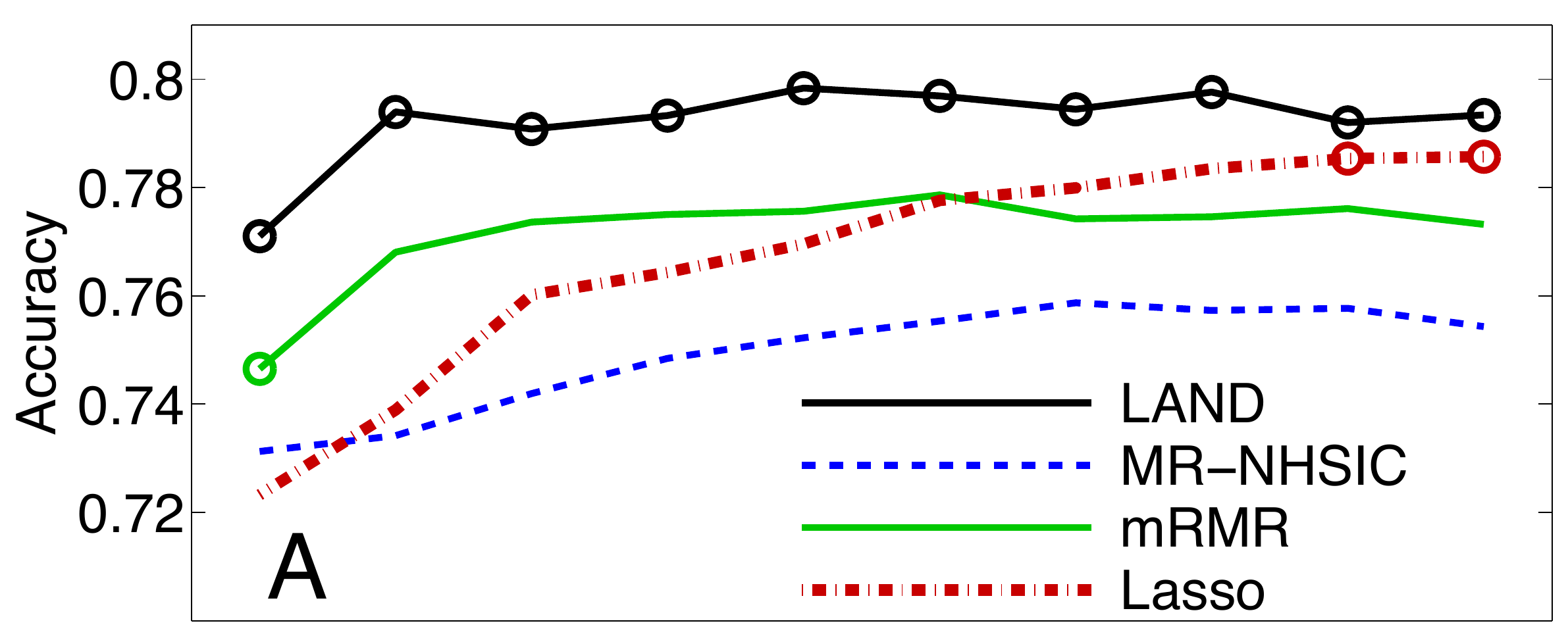}}  
\vspace{-0.2em}
\centerline{\hspace{-0.02cm}\includegraphics[width=0.5\columnwidth]{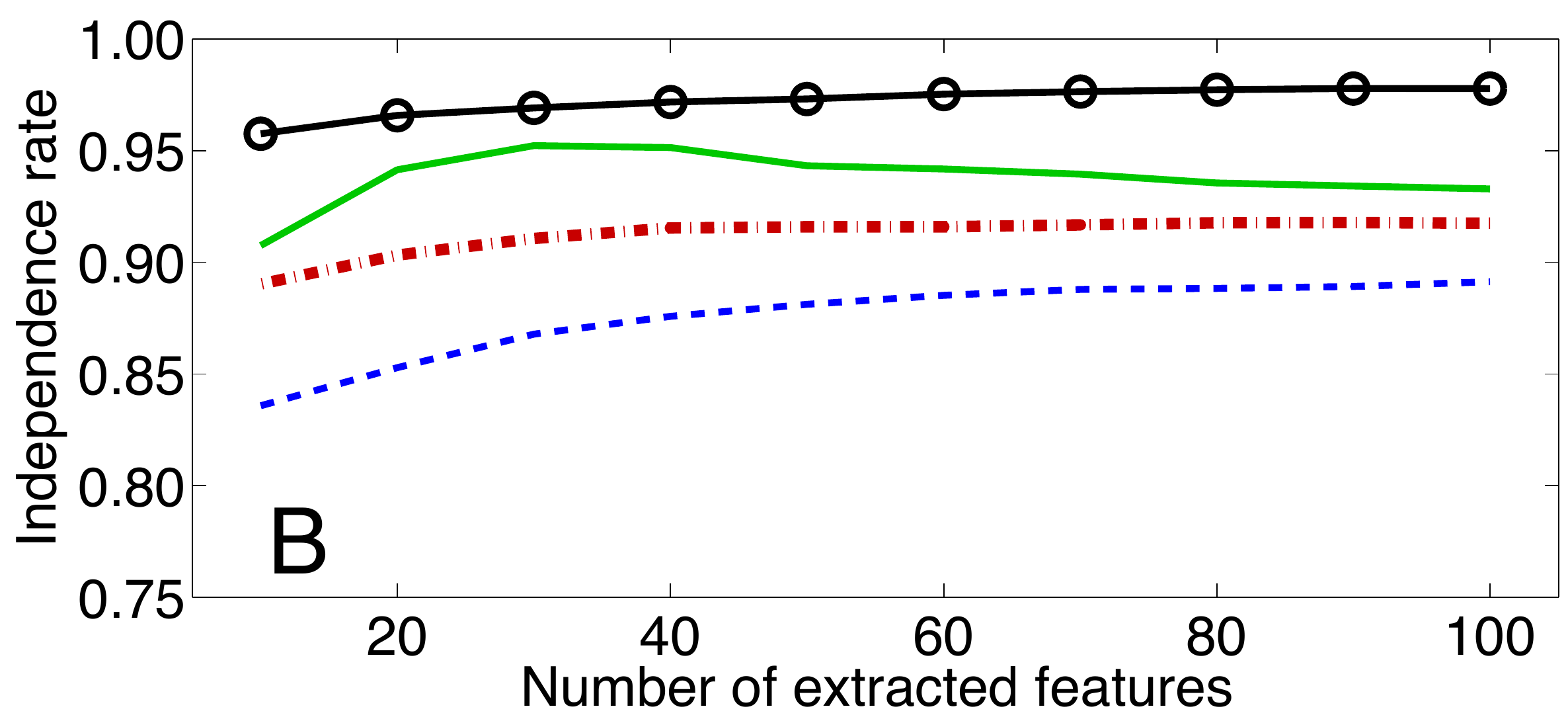}} 
\caption{Results on the p53 benchmark. (A)~Accuracy vs. number of selected features $m$. Circles indicate the methods achieving the best average accuracy according to a one-tailed t-test ($p<0.05$). Differences between LAND and all baselines are statistically significant for all dimensionality reduction levels except $m=10$ (vs. mRMR) and $m \geq 90$ (vs. Lasso). (B)~Independence rate vs. $m$: all differences are significant as indicated by circles.
\label{fig:lars_p53}
}
\end{figure}

\subsection{High dimensional and large-scale datasets}
In this section, we evaluate LAND using real-world high-dimensional and/or large-scale datasets.

We evaluate all feature selection methods by passing the selected features to a supervised learning algorithm. For this purpose we employ gradient boosting decision trees (GBDT)~\cite{friedman2001greedy} as an off-the-shelf nonlinear classifier.
\subsubsection{Prediction of p53 transcriptional activities} 
\label{sec:p53}

\begin{table}
\centering
\caption{Sizes of biological datasets and computational times (in seconds) to select 100 features using the nonlinear methods.\label{tab:feat_data_bio}}
\begin{tabular}{lcccccc}
\hline
Dataset & $d$ & $n$ &  \small{MR-NHSIC} & \small{mRMR} & \small{LAND}\\ \hline
p53 & 5408 & 26120 & 290 & 544 & 1709\\ 
PC & 276322 & 302 & 383 & 4018 & 1284\\
Enzyme & 1062420 & 13794 & 5328 & n/a & 10630\\ \hline
\end{tabular}
\end{table}

We first consider the p53 mutant dataset~\cite{danziger2009predicting}, where the goal is to predict whether any of $n=31420$ mutations is active or inactive based on $d=5408$ features. Class labels are obtained via \emph{in vivo} assays~\cite{danziger2009predicting}. For this data, we compared LAND with Lasso, mRMR, and MR-NHSIC baselines (see Methods) on the task of selecting $m=100$ features. We present results for LAND using $b = 20 \ll n$ and setting the basis vector heuristically to $\boldu_b = [-5, -4.47, \ldots, 4.47, 5.0]^\top \in \mathbbR^{20}$.  

Accuracy and independence rate metrics for features selected by LAND are compared with those obtained by three state-of-the-art feature selection baselines. We split the data into 26,420 observations used for training the learning algorithm and 5,000 observations for testing.  We run the classification experiments 20 times by randomly selecting training and test samples, and report the average accuracy and independence rate metrics. We select the $m$ most relevant features ($m = 10, 20, \ldots, 100$) and employ 100 trees with 20 nodes in the GBDT classifier. 

\begin{figure}[t]
\centerline{\includegraphics[width=0.5\columnwidth]{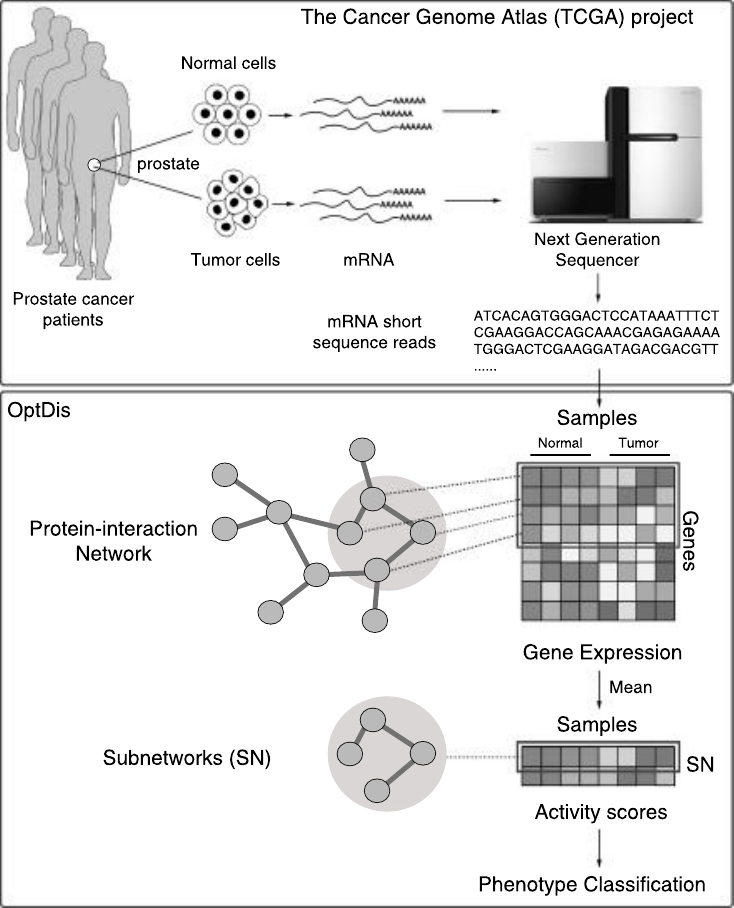}}
\caption{Overview of prostate cancer data. As part of The Cancer Genome Atlas (TCGA) project~\cite{TCGA_PRAD_2015}, mRNA sequence based gene expression profiles of tumor and normal cells were obtained from a cohort of prostate adenocarcinoma patients. The data was downloaded from the TCGA data portal. Using OptDis \cite{OPTDIS_2011}, we extracted connected subnetworks of maximum size 7 from the STRING v10 protein-protein interaction network~\cite{Szklarczyk2014} using only edges with score above 0.9. We finally used the average expression of the component genes of each subnetwork as a feature. 
\label{fig:prostateIntro}
}
\end{figure}

\begin{figure}[t]
\centerline{\includegraphics[width=0.5\columnwidth]{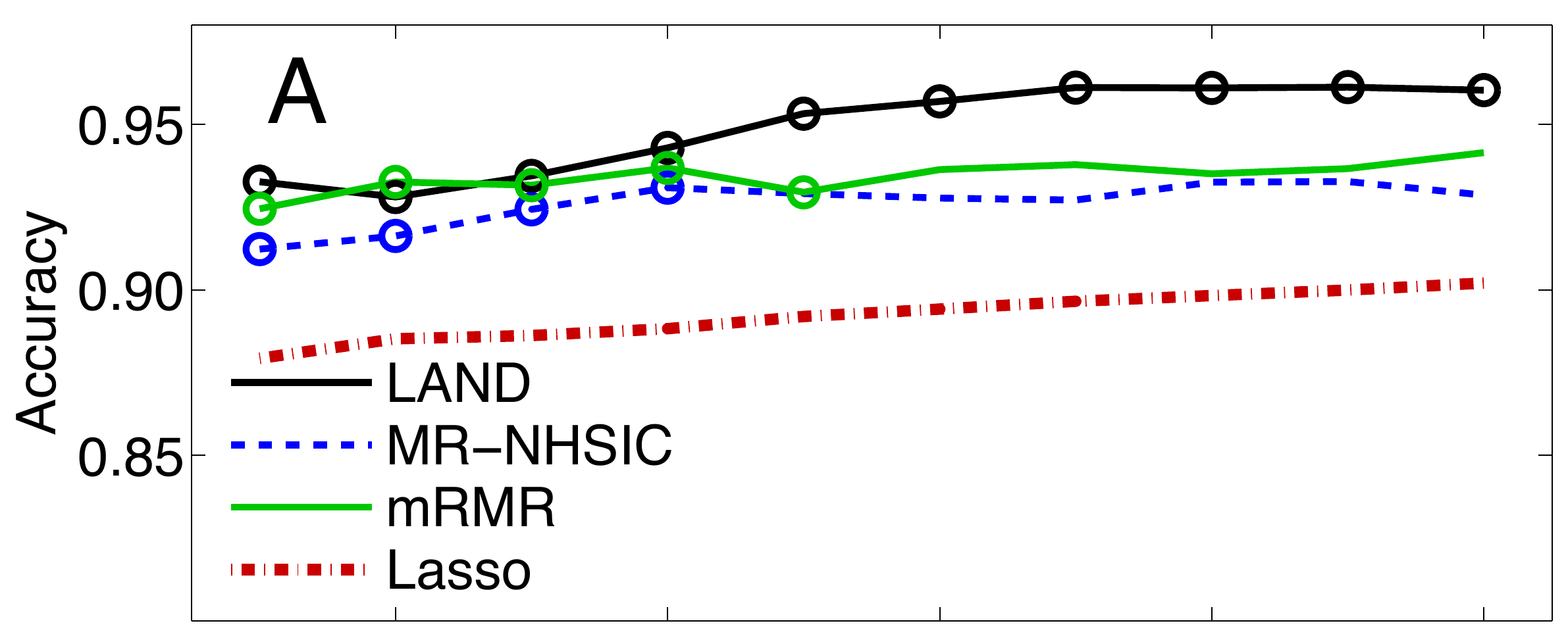}}  
\vspace{-0.2em}
\centerline{\hspace{-0.02cm}\includegraphics[width=0.5\columnwidth]{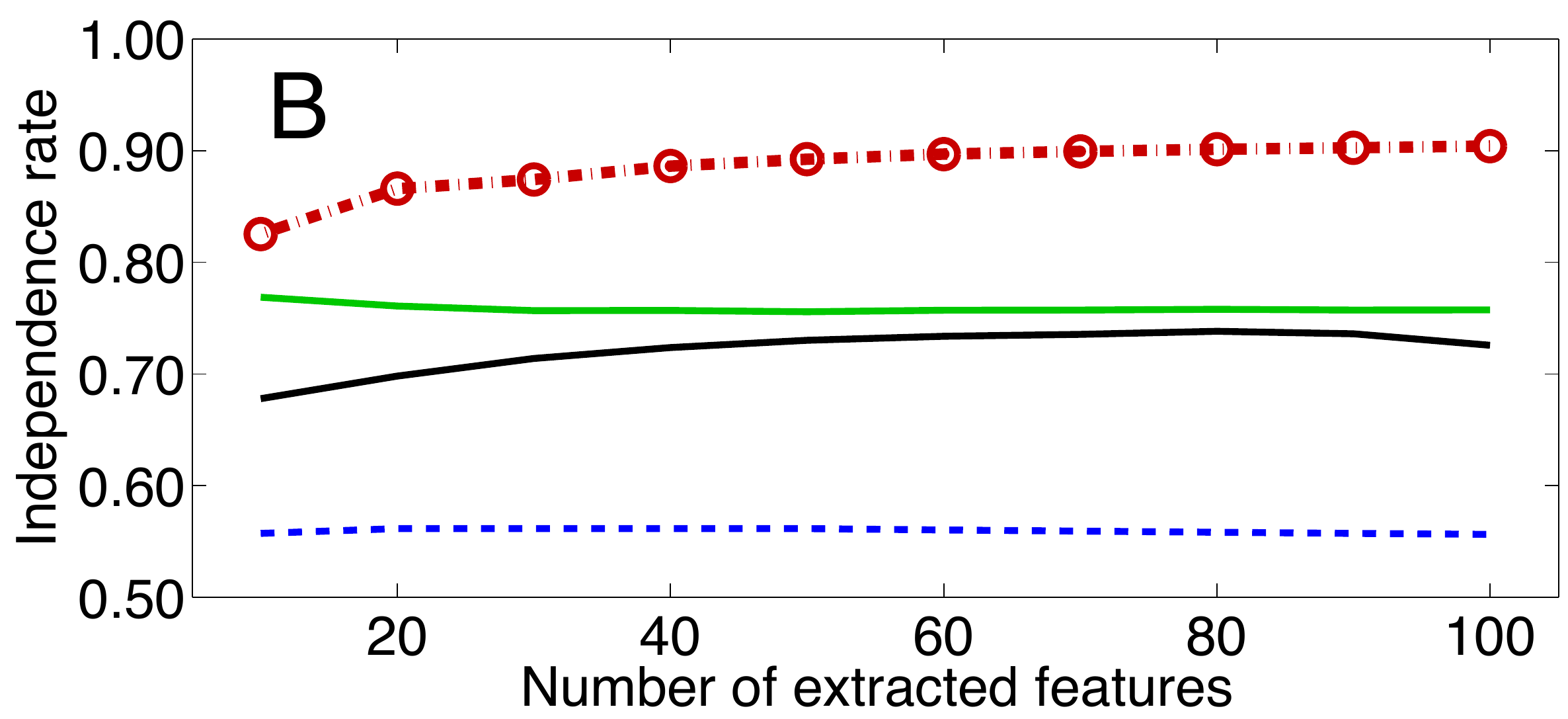}} 
\caption{Results on the prostate cancer dataset.  (A)~Accuracy vs. number of selected features $m$. Circles indicate the methods achieving the best average accuracy according to a one-tailed t-test ($p<0.05$). Differences between LAND and the baselines are statistically significant for $m \geq 60$. (B)~Independence rate vs. $m$: all differences are significant.
\label{fig:lars_cenk}
}
\end{figure}

\begin{figure}[t]
\centerline{\includegraphics[width=0.5\columnwidth]{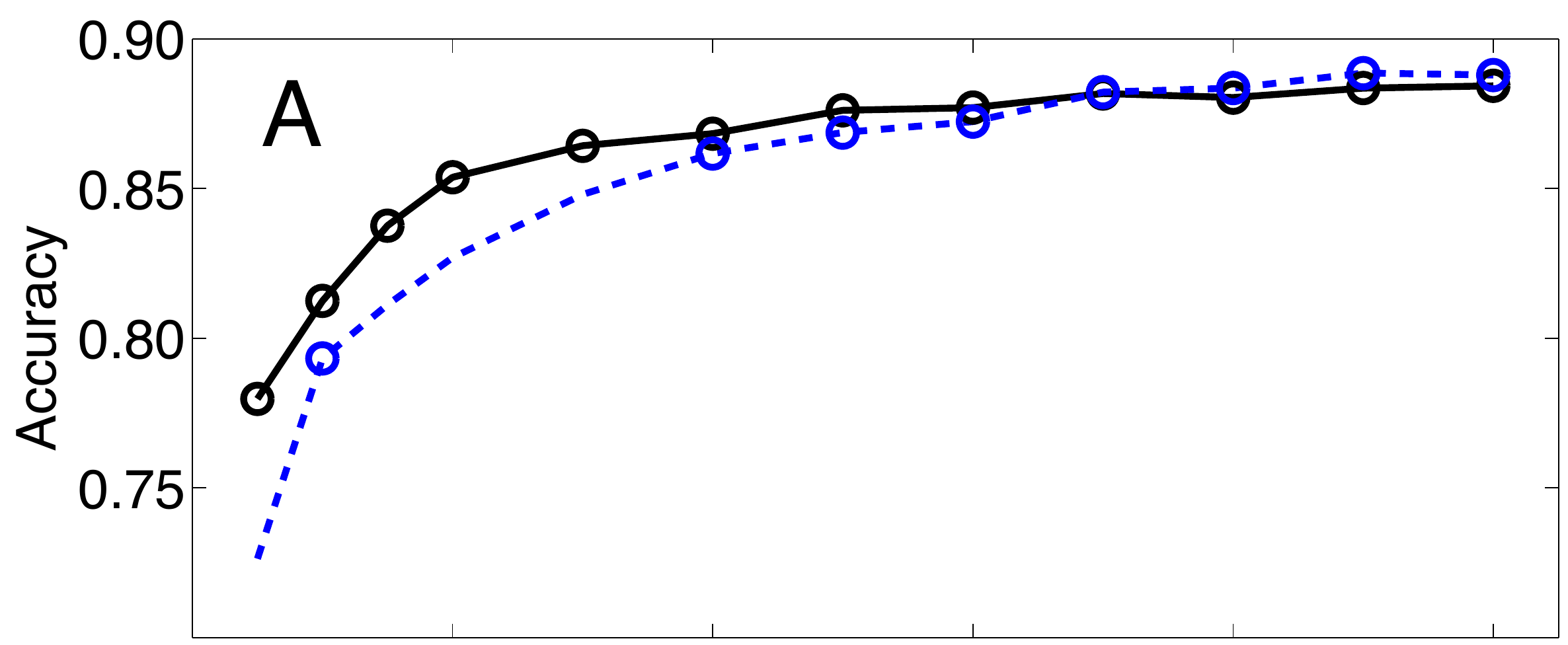}} 
\centerline{\includegraphics[width=0.5\columnwidth]{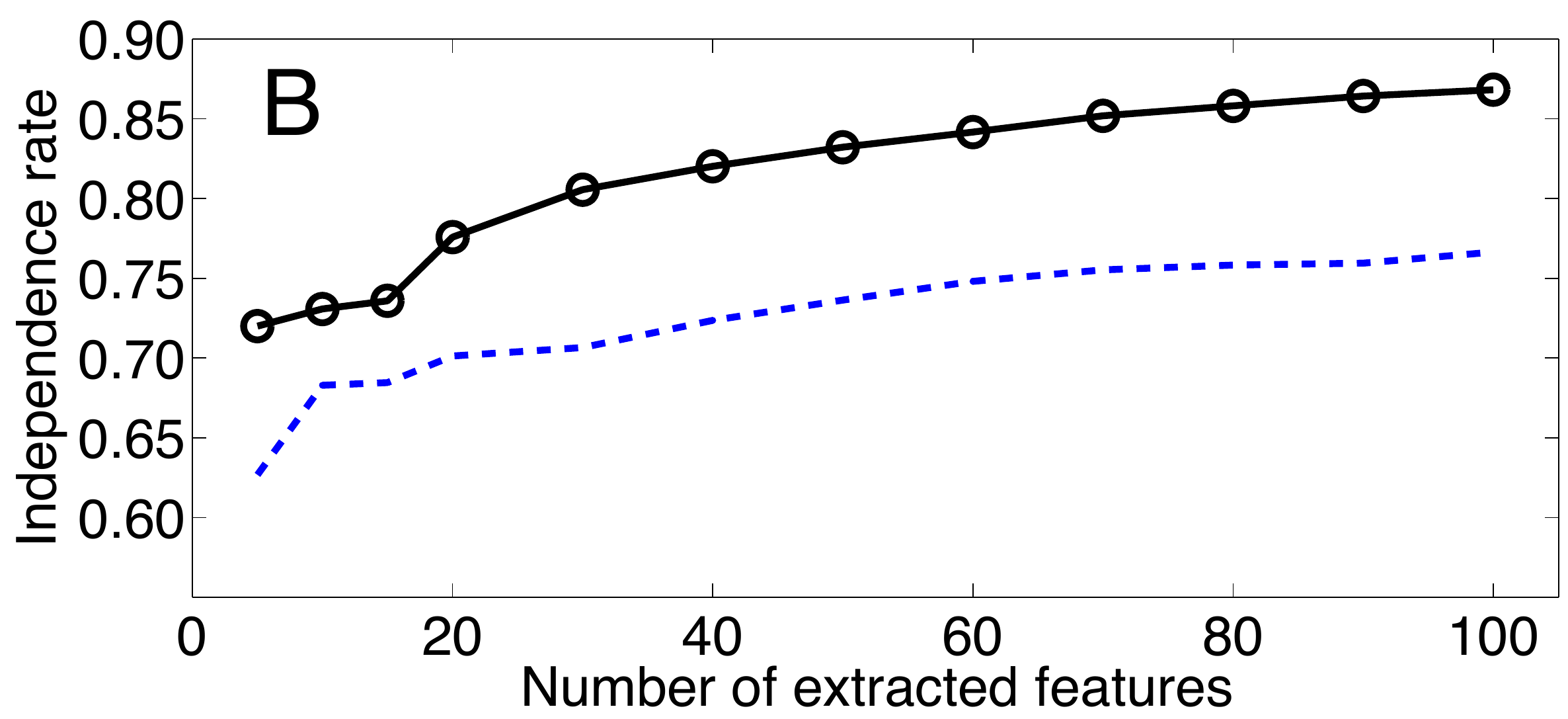}} 
\vspace{-0.2em}
\centerline{\includegraphics[width=0.5\columnwidth]{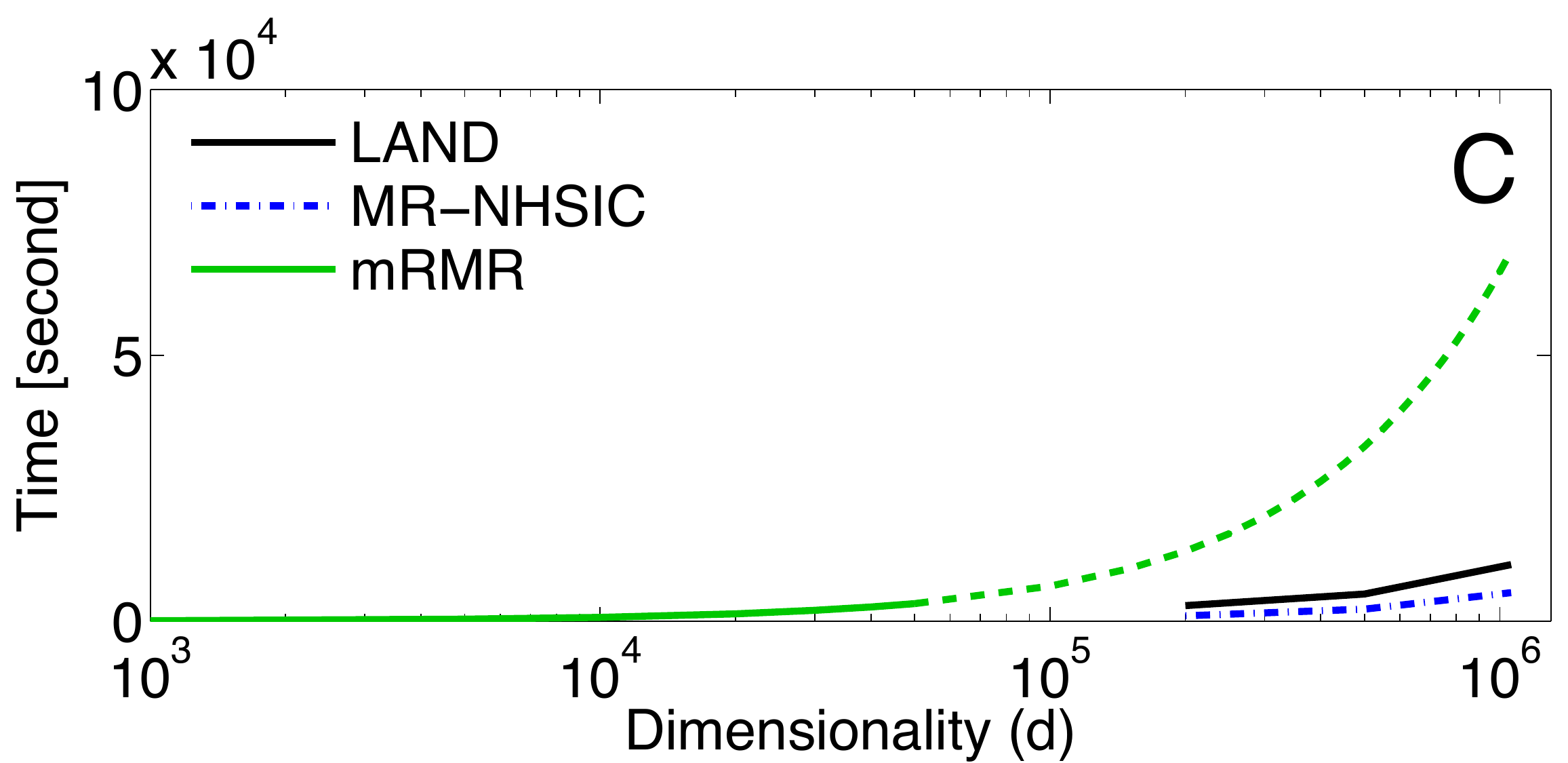}} 
\caption{Results for the enzyme dataset.  (A)~Differences in accuracy are significant for $m=5,20,30,40$. We also ran two linear Lasso baselines (not shown) with regularization parameter $\lambda=10^{-6}$ and $0.5\times 10^{-6}$. These settings yield low accuracy (AUC between 0.75 and 0.85) with many features (an average of $m=$1665 and 7675, respectively). (B)~Independence rate vs. $m$. (C)~Feature selection runtime vs. $d$. We ran the algorithms on reduced-dimensionality datasets. We estimated the mRMR runtime for $d>$ 50,000, given that it scales linearly with $d$ (dotted line).\label{fig:lars_enz}} 
\end{figure}

\begin{figure}[t]
\centerline{\includegraphics[width=0.7\columnwidth]{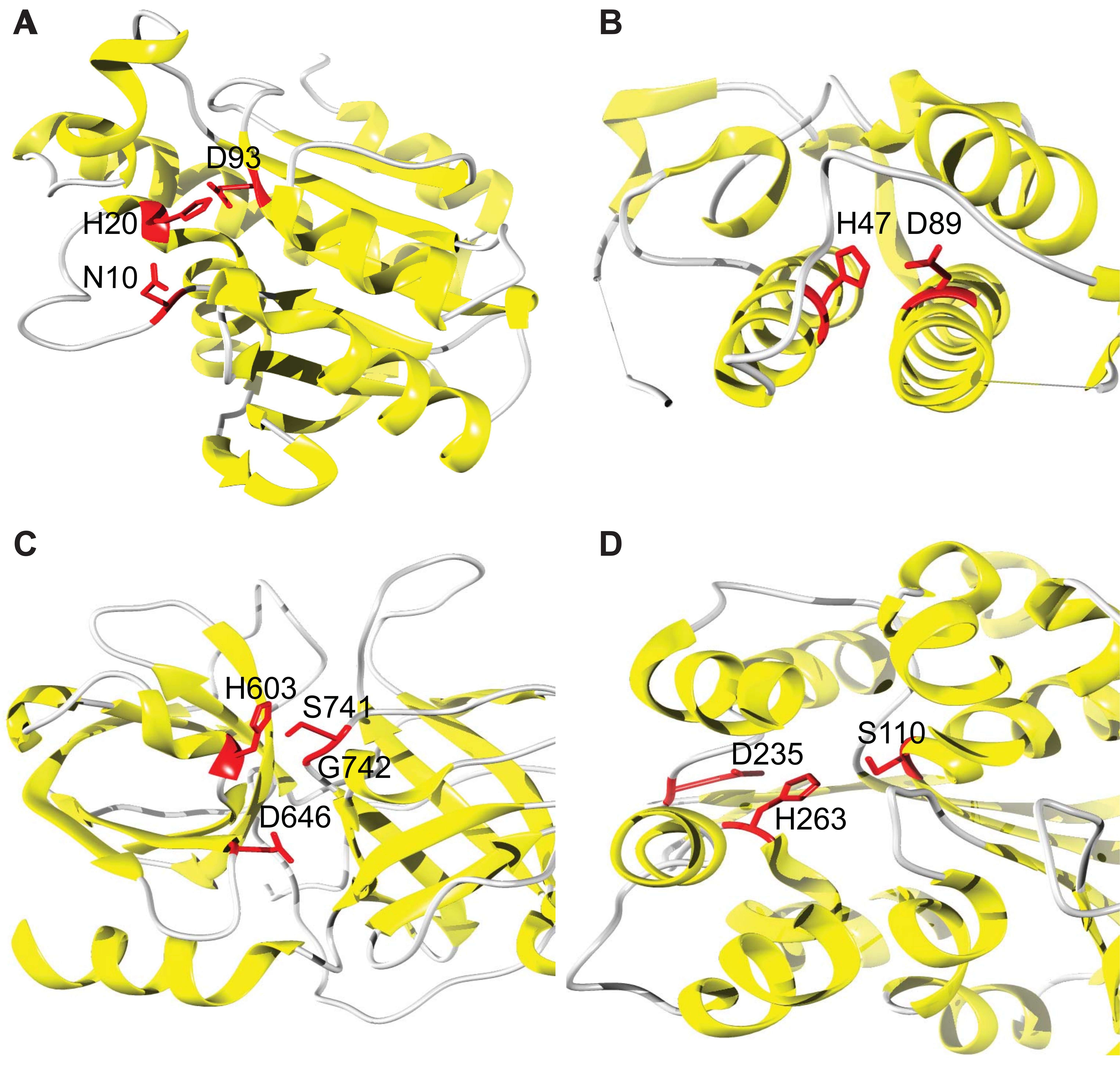}}  
\caption{Structural signatures of enzymes. Illustration of four distinct protein structures from PDB containing the DH structural motif. Catalytic residues, shown in red, were experimentally determined and extracted from Catalytic Site Atlas. (A)~Peptidyl-tRNA hydrolase (chain A of PDB entry 2pth) where residues N10, H20 and D93 form a catalytic triad. (B)~Basic phospholipase A2 piratoxin-3 (chain A of PDB entry 1gmz) with catalytic residues H47 and D89. (C)~Plasminogen (chain A of PDB entry 1qrz) with catalytic residues H603, D646, S741 and G742. (D)~2-hydroxy-6-oxo-6-phenylhexa-2,4-dienoate hydrolase (chain A of PDB entry 1c4x) where residues S110, D236 and H263 form a catalytic triad.\label{fig:cat_sig}}
\end{figure}

Runtimes are shown in Table~\ref{tab:feat_data_bio}. Given the small dimensionality $d$ of this problem, the speed up from distributed feature selection algorithms is not sufficient to offset their cluster overhead. Therefore the single-machine linear algorithm (Lasso) is the most efficient. LAND, however, selects better features. Fig.~\ref{fig:lars_p53}(a) shows that high accuracy (80\% AUC) can be achieved by LAND with a dimensionality reduction of over 99\%. Considering more than $m=20$ features yields marginal improvements in accuracy.  If one selects a small number of features, LAND outperforms the state-of-the-art nonlinear methods in accuracy. Conversely, the same accuracy can be achieved with fewer features. For this small-size benchmark, the performance of the state-of-the-art in linear feature selection (Lasso) is comparable.  
Fig.~\ref{fig:lars_p53}(B) plots the independence of the selected features versus $m$. The features selected by LAND are significantly less redundant compared to the baselines, irrespective of the dimensionality reduction. In summary, LAND selects the most independent features and achieves the top accuracy.  


\subsubsection{Subnetwork markers for prostate cancer classification}
\label{sec:cenk}

Next, we applied our approach to a cohort of $n=383$ prostate cancer (PC) patients. We aim to separate malignant tumors from normal tissues based on $d = 276322$ features (Fig.~\ref{fig:prostateIntro}). For LAND, we used $b = 20 \ll n$ and $\boldu_b = [-5, -4.47, \ldots, 4.47, 5.0]^\top \in \mathbbR^{20}$.

Accuracy and independence rate of features selected by LAND are compared with those obtained by the three baselines. We split the data into 344 patients used for training the learning algorithm and 39 patients for testing. We ran the classification experiments 20 times by randomly selecting training and test observations, and report average performance. We select $m$ most relevant features ($m = 10, 20, \ldots, 100$). The classifier employs 100 trees with 20 nodes. 


As shown in Fig.~\ref{fig:lars_cenk}(A), LAND achieves the best accuracy (AUC above 95\%) with as few as $m=80$ features --- a dimensionality reduction over 99.97\%. Lasso selects more independent features (Fig.~\ref{fig:lars_cenk}(B)), however its accuracy is lower.  
As shown in Table~\ref{tab:feat_data_bio}, LAND is more efficient than mRMR on a cluster. The time required  by mRMR increases dramatically with the dimensionality of the data, while the computing time for LAND does not. MR-NHSIC is even faster, however it does not select independent features (Fig.~\ref{fig:lars_cenk}(B)).

\subsubsection{Enzyme protein structure detection} 
\label{sec:enzyme}


Our third task is to distinguish between enzyme and non-enzyme protein structures. Enzyme data contains all homomeric protein structures from the Protein Data Bank (PDB) \cite{Berman2000} as of February 2015 such that (i) each structure is at least 50 amino acid residues long; and (ii) protein structure is determined by X-ray crystallography with resolution below 2.5\AA. All proteins with 100\% sequence identity to any other protein in the dataset are filtered out. In the case of multiple exact matches, the structure with best resolution is selected. To generate the features, the protein structures are modeled as contact graphs, where each residue is represented as a labeled vertex and two spatially close residues, with Euclidean distance between any two atoms below 4.5\AA, are linked by an undirected edge. Each feature is obtained by counting labeled rooted graphlets with up to four vertices \cite{Lugo2014}. 
More details on rooted graphlets can be found in the literature \cite{Przulj2004, Przulj2007, vacic2010graphlet}.
There are $n=15328$ observation vectors (7,767 enzymes and 7,561 non-enzymes), each of dimensionality $d=1062420$. We split the observations into 90\% (13794) for training and 10\% (1534) for testing. We report average performance across five random splits. We use classifiers with 500 trees and 20 nodes, and select $m = 5,10,20,\ldots,100$ features.  Circles indicate best accuracy/independence rate according to t-tests ($p<0.05$).

To explore the complexity stemming from the ultra-high dimensionality of this problem, we trained a state-of-the-art classifier based on the graph kernel method \cite{Lugo2014} on the full dataset. The resulting model achieved high accuracy (AUC above 90\%), but needed roughly 18 days of computing time using our fastest server (a machine with 64 2.4~GHz processors and 512~GB of RAM). This demonstrates the need for feature selection. 

The naive version of LAND requires memory that scales as as $O(dn^2)$ (cf. Table~\ref{tab:complexity}) for storing the kernel matrices. Due to the very large number of features and the large number of observations in this dataset, this is prohibitive --- approximately 1.5 petabytes. By using $b=10 \ll n$, we reduce the space complexity to $O(dbn)$ (cf. Table~\ref{tab:complexity}) and the memory requirement to a more manageable one terabyte for LAND. We also use $\boldu_b = [-5, -3.89, \ldots, 3.89, 5.0]^\top \in \mathbbR^{10}$.

Due to the size of this dataset, running the mRMR baseline would require hundreds of gigabytes of memory. Since this is unfeasible, we only compare LAND with one nonlinear baseline (MR-NHSIC). LAND achieves higher accuracy when selecting very few features, reducing the dimensionality of the problem by over 99.99\% (Fig.~\ref{fig:lars_enz}(A)).  This also implies more interpretable results and an enormous speed up in classification/prediction time. Accuracy is only slightly lower than what is obtained with the state-of-the-art classifier using all $d$ features. Two linear (Lasso) baselines yielded lower accuracy with a worse dimensionality reduction. The features selected by LAND have lower redundancy than those selected by the baseline (Fig.~\ref{fig:lars_enz}(B)).
%
Although Lasso does not achieve good performance, it is the most efficient and can select features in 973 seconds.  Fig.~\ref{fig:lars_enz}(C) compares the computational time of LAND, MR-NHSIC, and mRMR. LAND runs in just 3 hours on a computer cluster --- about double the MR-NHSIC runtime (cf. Table~\ref{tab:feat_data_bio}). This is a trade-off for the independence of the selected features. 

\section{Discussion}
\label{sec:conclusion}

The proposed LAND feature selection method is guaranteed to find a globally optimal solution, and does so efficiently by exploiting a non-negative variant of the LARS algorithm where the parameter space is sparse. Our experimental results demonstrate that LAND scales up to large and high-dimensional biological data by allowing a distributed implementation on commodity cloud computing platforms.

Let us further investigate the structural motifs identified by LAND from the enzyme data. Our initial requirement was that the method be able to identify catalytic site-related features, as well as other features relevant for the signatures of enzymatic structures. One of the most interesting motifs is the DH graphlet, containing aspartic acid and histidine, which is also the most commonly seen motif in the Catalytic Site Atlas \cite{Porter2004}. Combined with a variety of other residues, such as serine and asparagin, the DH motif frequently forms catalytic sites.  Fig.~\ref{fig:cat_sig} shows four different protein structures with an identified DH motif. LAND is therefore able to identify biologically-relevant motifs in extremely large feature spaces and can be readily used to speed up structure-based models in computational biology. It could also enhance data exploration, e.g., via a recursive study of all DH enzymes, which would result in subtyping of catalytic sites. While identification of structural motifs is not novel in computational biology \cite{Xin2011}, scaling up such methods to extremely high dimensions is an important new step in the field.







\bibliographystyle{IEEEtran}
\bibliography{main}

%








\end{document}